\documentclass{article} 
\usepackage[final]{anneal_style}

\usepackage{microtype}
\usepackage{hyperref}
\usepackage{url}
\usepackage{booktabs}
\usepackage{amsmath,amssymb,amsthm}
\usepackage{mathtools}
\usepackage{algorithm}
\usepackage{algpseudocode}
\usepackage{enumitem}
\usepackage{multirow}
\usepackage{float}
\usepackage{graphicx}
\usepackage{subcaption}
\usepackage[most]{tcolorbox}
\usepackage{etoolbox}
\usepackage{xspace}
\usepackage{lineno}
\usepackage{soul}

\graphicspath{{Figures/}}

\definecolor{darkblue}{rgb}{0, 0, 0.5}
\definecolor{boxbluebg}{RGB}{245,248,252}
\definecolor{boxblueframe}{RGB}{86,112,148}
\definecolor{boxgreenbg}{RGB}{244,249,244}
\definecolor{boxgreenframe}{RGB}{74,120,89}
\definecolor{boxorangebg}{RGB}{252,247,240}
\definecolor{boxorangeframe}{RGB}{156,110,60}
\hypersetup{colorlinks=true, citecolor=darkblue, linkcolor=darkblue, urlcolor=darkblue}
\tcbset{
artifactbox/.style={enhanced,breakable,boxrule=0.5pt,arc=1.5mm,left=1mm,right=1mm,top=0.8mm,bottom=0.8mm,fonttitle=\bfseries\footnotesize,coltitle=black},
artifactblue/.style={artifactbox,colback=boxbluebg,colframe=boxblueframe},
artifactgreen/.style={artifactbox,colback=boxgreenbg,colframe=boxgreenframe},
artifactorange/.style={artifactbox,colback=boxorangebg,colframe=boxorangeframe},
}
\AtBeginEnvironment{tabular}{\scriptsize}

\newcommand{\sys}{\textsc{Anneal}\xspace}
\newcommand{\kg}{\ensuremath{\mathcal{KG}}\xspace}

\newcommand{\sone}{\ensuremath{\mathcal{S}_1}\xspace}
\newcommand{\stwo}{\ensuremath{\mathcal{S}_2}\xspace}
\newcommand{\pv}{p_{\text{viol}}\xspace}
\newcommand{\editkey}{\texttt{edit\_key}}
\newcommand{\Khist}{K_{\text{max-history}}}
\newcommand{\verifier}{\textsc{Verify}\xspace}

\theoremstyle{plain}
\newtheorem{theorem}{Theorem}

\title{ANNEAL: Adapting LLM Agents via Governed Symbolic Patch Learning\thanks{\textbf{Code and data available at:} \url{https://github.com/sbhakim/anneal-agents}}}

\author{
\fontsize{10}{12}\selectfont
\textbf{Safayat Bin Hakim}\textsuperscript{\textnormal{$\spadesuit$}} \hspace{2em}
\textbf{Keyan Guo}\textsuperscript{\textnormal{$\diamondsuit$}} \hspace{2em}
\textbf{Wenkai Tan}\textsuperscript{\textnormal{$\spadesuit$}} \hspace{2em}
\textbf{Alvaro Velasquez}\textsuperscript{\textnormal{$\clubsuit$}} \\
\fontsize{10}{12}\selectfont
\textbf{Shouhuai Xu}\textsuperscript{\textnormal{$\heartsuit$}} \hspace{2em}
\textbf{Houbing Herbert Song}\textsuperscript{\textnormal{$\spadesuit$}} \\[0.5ex]
\fontsize{9.5}{11.5}\selectfont\normalfont
\textsuperscript{$\spadesuit$}University of Maryland, Baltimore County \hspace{2em}
\textsuperscript{$\diamondsuit$}University at Buffalo\\
\fontsize{9.5}{11.5}\selectfont\normalfont
\textsuperscript{$\clubsuit$}University of Colorado Boulder \hspace{2em}
\textsuperscript{$\heartsuit$}University of Colorado Colorado Springs
}

\begin{document}

\ifcolmsubmission
\linenumbers
\fi

\maketitle

\begin{abstract}
LLM-based agents can recover from individual execution errors, yet they repeatedly fail on the same fault when the underlying process knowledge---operator schemas, preconditions, and constraints---remains unrepaired. Existing self-evolving approaches address this gap by updating prompts, memory, or model weights, but none directly repair the symbolic structures that encode how tasks are executed, and few provide the governance guarantees required for safe deployment. We introduce \sys, a neuro-symbolic agent that converts recurring failures into governed symbolic edits of a process knowledge graph without modifying foundation model weights. Its core mechanism, \textbf{Failure-Driven Knowledge Acquisition (FDKA)}, localizes the responsible operator, synthesizes a typed patch through constrained LLM generation, and validates the proposal via multi-dimensional scoring, symbolic guardrails, and canary testing before commit. Every accepted edit carries full provenance and deterministic rollback capability. Across four domains and 27 multi-seed runs, \sys is the only evaluated system that commits persistent structural repairs---strong baselines such as ReAct and Reflexion achieve high episodic recovery yet retain 72--100\% holdout failure rates on recurring faults, whereas \sys reduces these to 0\% in the tested recurring-failure settings. Ablation confirms that removing FDKA eliminates all structural repairs and drops success rate by up to 26.7 percentage points. These results suggest that governed symbolic repair offers a complementary paradigm to weight-level and prompt-level adaptation for persistent fault elimination.
\end{abstract}

\section{Introduction}
\label{sec:intro}

\vspace{-4mm}
\looseness=-1

When a human expert encounters a recurring process failure---say, a booking system that keeps rejecting corporate cards during blackout dates---they do not simply retry harder; they repair the underlying procedure by adding a check for blocked dates. This ability to convert recurring breakdowns into durable structural fixes is central to expertise \citep{belle2023neuro}. LLM-based agents lack this capability: they may recover within a single episode, but when the same fault reappears, they fail again because the process knowledge that caused it---operator schemas, preconditions, and constraints---remains unchanged \citep{goel2024neurosymbolic}.

Recent work on self-evolving agents addresses this gap through several paradigms. Reflection-based methods store verbal summaries across episodes \citep{shinn2023reflexion} but leave the agent's operational knowledge intact; the same structural fault can recur on every new task. Memory-augmented approaches retrieve past trajectories or distilled principles to guide decisions \citep{wu2025evolver,zhong2024memorybank}, yet they adapt what the agent \textit{recalls}, not the process definitions it \textit{executes}. Reinforcement learning on memory \citep{liu2025memrl} and evolutionary prompt optimization \citep{he2025evotest,wang2025ace} offer stronger adaptation but still operate over prompts, policies, or weights---none directly repair the typed symbolic structures that encode how tasks are planned and executed. Moreover, few of these systems provide the governance guarantees---provenance, guardrails, canary testing, rollback---that safe deployment demands \citep{wan2024towards,shao2025misevolution}.

In this work, we introduce \sys (Figure~\ref{fig:overview}), a neuro-symbolic agent that bridges this gap by converting recurring failures into governed symbolic edits of a process knowledge graph, without modifying foundation model weights. Rather than updating what the agent recalls or how it prompts, \sys repairs the operator definitions that the planner directly consults---ensuring that once a fault is fixed, it cannot structurally recur. Its core mechanism, \textbf{Failure-Driven Knowledge Acquisition (FDKA)}, treats the LLM strictly as a constrained code generator under a closed typed schema, so that all acceptance logic remains symbolic and auditable. Because no single validation step suffices for safe deployment, proposed edits must survive multi-dimensional scoring, deontic and causal guardrails, and canary testing before commit---each layer catching failure modes the others miss. A metacognitive controller further regulates when adaptation is warranted by arbitrating between fast and deliberative pathways based on uncertainty and violation signals. A formal definition of the target environments and the open-world instructability problem scope is provided in Appendix~\ref{app:problem}.

\begin{figure}[t]
\centering
\vspace{-3mm}
\includegraphics[width=\linewidth]{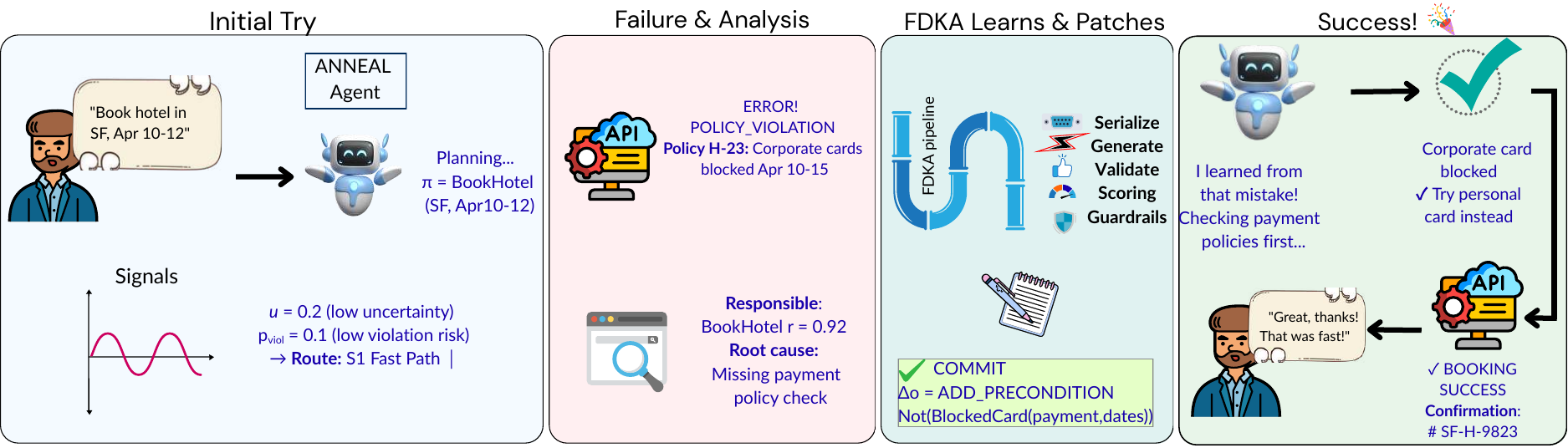}
\caption{\textbf{\sys adaptation cycle.} User requests hotel booking; execution fails due to policy violation. FDKA localizes the responsible operator, synthesizes a precondition patch, validates through scoring and guardrails, and commits. Replan with updated operator succeeds. Complete cycle: $<$2 minutes, no model retraining.}
\label{fig:overview}
\vspace{-4mm}
\end{figure}

\textbf{Contributions.} Our work combines neuro-symbolic planning \citep{goel2024neurosymbolic,kwon2025fast}, metacognitive control \citep{bergamaschi2025fast,ji2025language}, and causal/value verification \citep{jaimini2024causal,zi2025cognitive} into a governed system for symbolic process repair. We make the following contributions:

\begin{enumerate}[leftmargin=*,labelsep=4pt,itemsep=2pt]
\item \textbf{Governed failure-driven knowledge acquisition.} We propose FDKA, a pipeline that synthesizes typed symbolic edits (schema updates, precondition additions, effect refinements) under a closed JSON schema and validates them through multi-dimensional scoring, value/causal guardrails, and canary deployment. Across the 27 runs reported here, FDKA achieves 100\% patch acceptance with no observed rollbacks (Appendix~\ref{app:propose-edit},~\ref{app:governance-detail}).

\item \textbf{Metacognitive arbitration.} A stage-aware control loop monitors uncertainty and violation signals to arbitrate between fast (S1), deliberative (S2), and verify-before-act pathways under explicit budgets. Removing arbitration drops success rate by 4.7\,pp under stochastic noise, where compound policy shifts create multi-violation chains that exceed the fast path's budget.

\item \textbf{Comprehensive governance envelope.} Every committed edit carries provenance, Beta--Bernoulli trust scoring, human-in-the-loop gates, and deterministic rollback. A dedicated governance-stress benchmark confirms selective activation: all six auth-sensitive schema edits are escalated before commit in the reported suite, yielding 100\% decision accuracy on that benchmark.

\item \textbf{Empirical evidence across four domains.} Across 27 multi-seed runs, \sys is the only evaluated system that commits persistent structural repairs while achieving 94.7--100\% task success. In recurring-failure stress tests, committed patches reduce holdout failures to 0\% in the evaluated settings, whereas ReAct and Reflexion retain 72--100\% holdout failure rates despite high episodic recovery. Removing FDKA drops success rate by up to 26.7 percentage points and eliminates all structural repairs (Appendix~\ref{app:proofs} provides a TTA convergence bound).
\end{enumerate}

\noindent Section~\ref{sec:background} reviews related work; Section~\ref{sec:formalization} formalizes the agent and control loop; Section~\ref{sec:fdka} details FDKA and governance; Section~\ref{sec:empirical} presents evaluation; Section~\ref{sec:discussion} discusses limitations.

\vspace{-4mm}
\looseness=-1

\section{Related Work}
\label{sec:background}

\vspace{-4mm}
\looseness=-1

Additional background on neuro-symbolic planning, metacognitive control, and symbolic knowledge editing appears in Appendix~\ref{app:problem}; the main text focuses here on the most novelty-sensitive comparisons.

\textbf{Governance for self-evolving systems.}
Integrating causal and value reasoning is critical for safe autonomy \citep{jaimini2024causal,zi2025cognitive}. Surveys on governed deployment \citep{wan2024towards} emphasize provenance, human oversight, and rollback as prerequisites, while risk analyses show that self-evolution can degrade safety through memory, tool, model, or workflow drift \citep{shao2025misevolution}. \sys instantiates this line of work through metacognitive arbitration, multi-dimensional scoring, and an explicit governance envelope (provenance, trust, rollback).

\textbf{Self-evolving agents.}
Surveys organize self-evolving agents by what, when, and how to evolve \citep{gao2025surveyselfevolving}. Current approaches span explicit evolution over persistent artifacts \citep{lin2026aevolve}, whole-agent configuration updates at test time \citep{he2025evotest}, offline self-distillation plus online policy reinforcement \citep{wu2025evolver}, trajectory-level self-improvement \citep{lin2025seagent}, autonomous policy learning from experience \citep{sun2025seagent}, and memory-augmented or RL-based adaptation \citep{zhong2024memorybank,liu2025memrl,wang2025ace}. These systems mainly evolve prompts, memories, trajectories, or policies; the closer-in-spirit artifact-evolving family edits broader persistent artifacts (tools, configurations, workflows) but does not specifically target typed symbolic process knowledge. By contrast, \sys targets typed symbolic process knowledge---operators, preconditions, effects, tool schemas, and constraints---and accepts only governed edits that survive scoring, guardrails, and canary tests (Table~\ref{tab:selfevolving-compare}). Its contribution is therefore narrower but more concrete: governed, weight-frozen structural repair of process knowledge rather than broad self-evolution over arbitrary agent components. Adjacent model-editing work instead alters internal model parameters or auxiliary memories to change behavior, and recent results highlight nontrivial trade-offs among reliability, generalization, locality, capability retention, and safety under repeated edits \citep{li2024shouldedit,wang2024wise}. \sys differs by editing explicit symbolic process knowledge outside the foundation model, making changes directly auditable, reversible, and compatible with governance constraints.

\begin{table}[t]
\centering
\caption{\textbf{Representative self-evolving systems.} \sys is closest in spirit to artifact-level evolution, but differs in its typed symbolic repair target and stricter governed commit path.}
\label{tab:selfevolving-compare}
\begin{tabular}{lp{2.0cm}cp{2.9cm}p{4.2cm}}
\toprule
\textbf{System} & \textbf{Main target} & \textbf{Weights} & \textbf{Validation} & \textbf{Difference from \sys} \\
\midrule
A-Evolve \\ \citep{lin2026aevolve} & Persistent artifacts & Optional & Explicit verify / rollback & Broader framework over artifacts such as tools, workflows, and tests; less specifically centered on symbolic process-knowledge repair. \\
EvoTest \\ \citep{he2025evotest} & Prompt, memory, hyperparameters, tool-use routines & Frozen & Selection over evolved configs & Evolves whole-agent configuration between repeated episodes rather than committing typed symbolic patches. \\
EvolveR \\ \citep{wu2025evolver} & Distilled principles and policy & Updated & Experience curation + RL objective & Learns natural-language principles and policy behavior; does not primarily repair symbolic operators, schemas, or constraints. \\
\sys & Symbolic process knowledge & Frozen & Scoring + guardrails + canary + rollback & Commits typed edits to operators, schemas, and constraints under an explicit governance envelope. \\
\bottomrule
\end{tabular}
\vspace{-4mm}
\end{table}

The next section formalizes \sys's agent tuple and control loop before detailing FDKA and governance in Section~\ref{sec:fdka}.

\vspace{-2mm}
\looseness=-1

\section{System Overview and Control}
\label{sec:formalization}

\vspace{-4mm}
\looseness=-1

\textbf{Agent tuple.} Let $\mathcal{A} = (\Pi, \Sigma, \mathcal{E}, \mathcal{M}, \mathcal{R})$, where $\Pi$ is the HTN planner consulting PKG, $\Sigma$ is the typed symbolic state, $\mathcal{E}$ is the executor, $\mathcal{M}$ is the metacognitive controller, and $\mathcal{R} = (\mathcal{R}_{\text{rules}}, \mathcal{R}_{\text{exp}})$ contains the \textbf{rule pool} (operators, constraints) and \textbf{experience pool} (indexed failure traces). The target environments and formal scope of open-world instructability are defined in Appendix~\ref{app:problem}. Throughout, $u$ denotes uncertainty, $\pv$ violation probability, $B$ remaining budget, and $\tau_u, \tau_p$ their respective escalation thresholds.

An \textbf{operator} is $o = \langle \text{name}, \text{params}, \text{pre}(o), \text{eff}(o), \text{cost}(o) \rangle$. A \textbf{patch} $\Delta o$ modifies these fields (typically adding preconditions). Acceptance is governed by:
\begin{equation}
a = \mathbf{1}\!\left[ \textsc{Score}(\Delta o) \geq \theta \;\land\; \textsc{Verify}(\Delta o) = \texttt{allow} \right],
\label{eq:acceptance}
\end{equation}
where \textsc{Score} aggregates plausibility, consistency, utility, and risk (Appendix~\ref{app:propose-edit}); \textsc{Verify} enforces value and causal constraints. Accepted patches are staged, canaried, and committed with provenance; rejected patches are logged for audit.

\begin{figure}[t]
\centering
\vspace{-4mm}
\begin{subfigure}{\linewidth}
  \centering
  \includegraphics[width=0.87\linewidth,keepaspectratio]{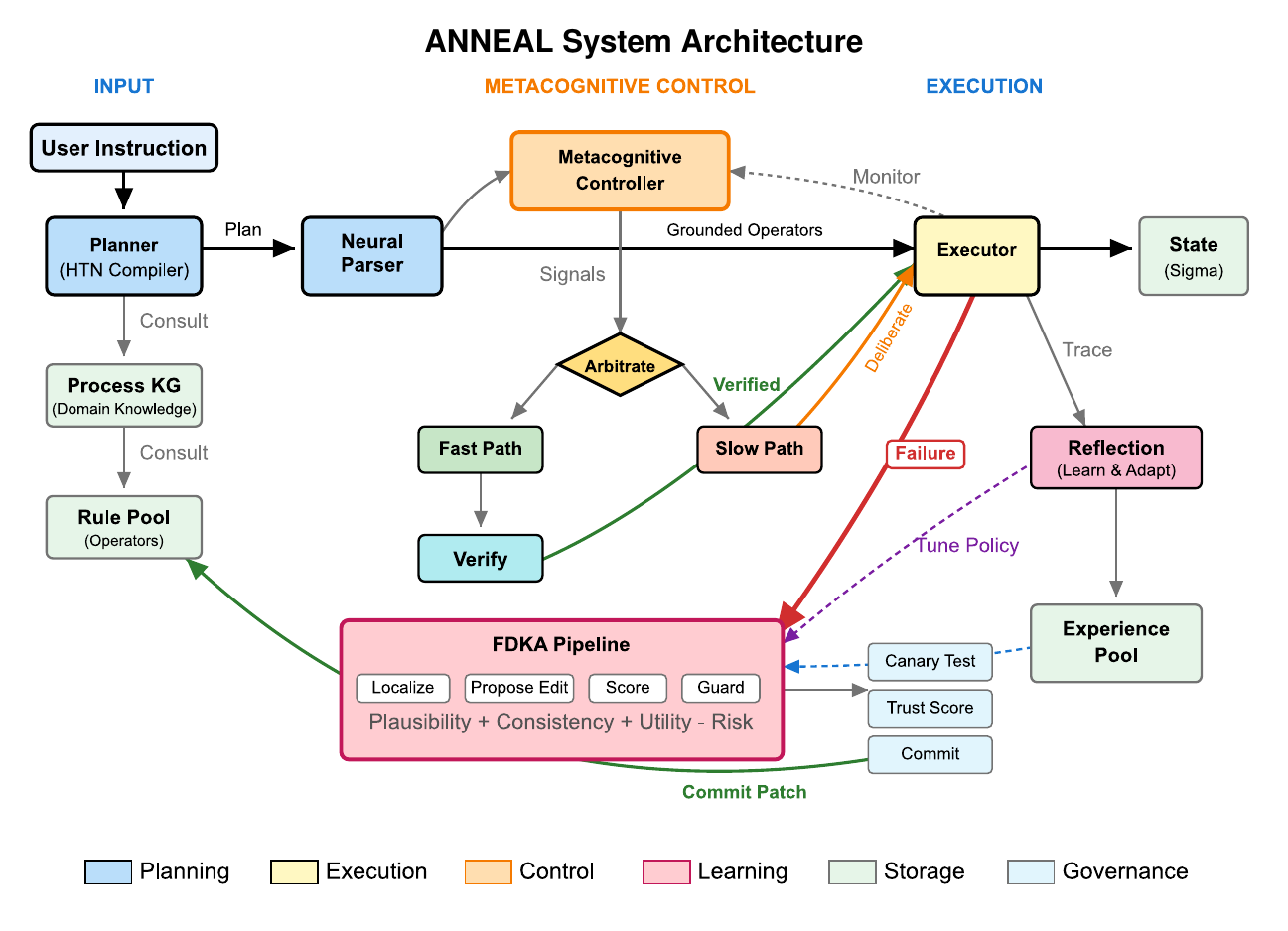}
  \vspace{-1mm}
  \caption{Execution and control loop.}
  \label{fig:architecture-loop}
\end{subfigure}

\vspace{-1mm}

\begin{subfigure}{\linewidth}
  \centering
  \includegraphics[width=\linewidth,keepaspectratio]{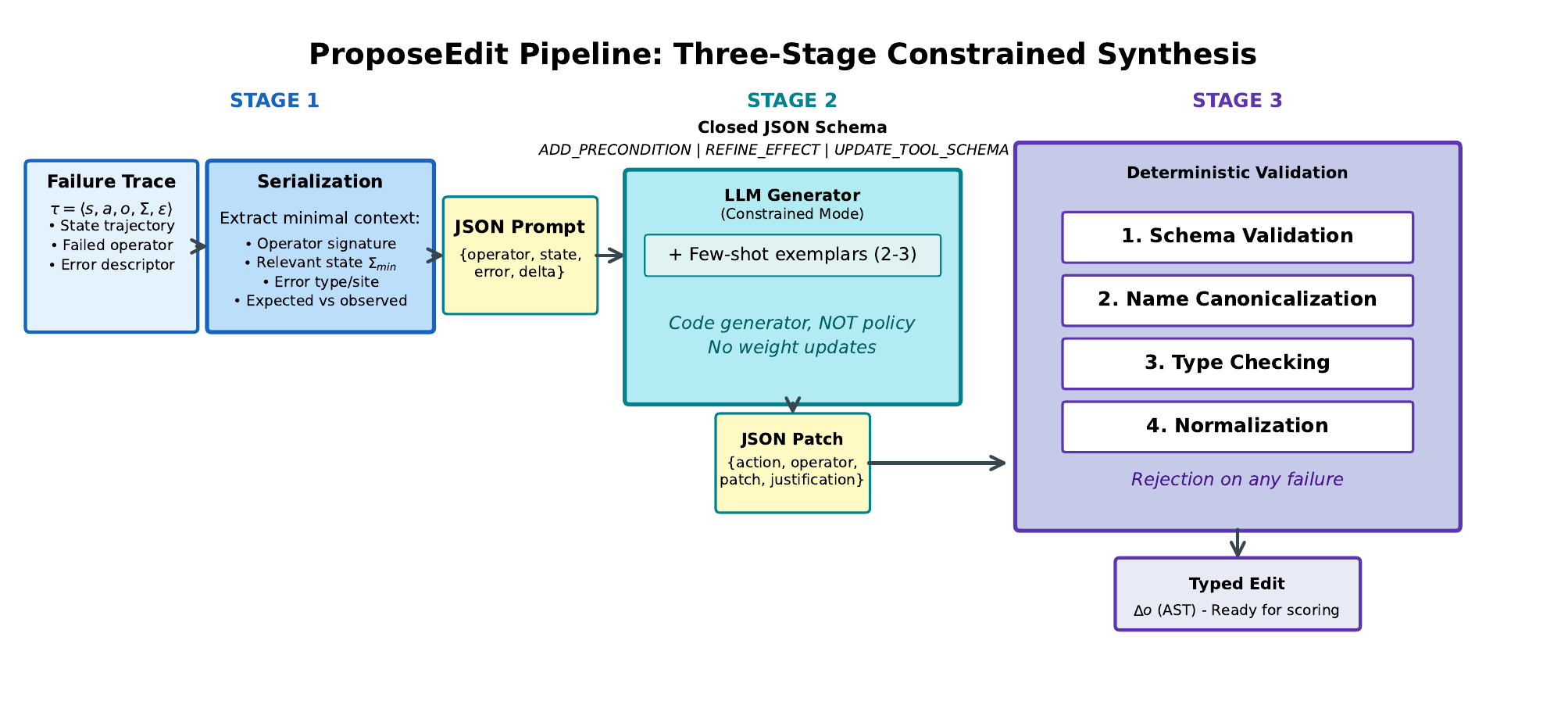}
  \vspace{-3mm}
  \caption{FDKA pipeline: localization\,$\to$\,proposal\,$\to$\,scoring\,$\to$\,guardrail\,$\to$\,canary\,$\to$\,commit.}
  \label{fig:architecture-fdka}
\end{subfigure}
\vspace{-2mm}
\caption{\textbf{\sys system architecture.} (a) Instructions compile into HTN plans via the Process Knowledge Graph (PKG); the metacognitive controller $\mathcal{M}$ monitors uncertainty $u$ (token-level entropy) and violation probability $\pv$ (logistic heuristic over precondition gaps), and arbitrates between S1 (fast), S2 (deliberative), and \verifier (precondition check) pathways under budget $B$. (b) When a failure persists, FDKA proposes a typed patch $\Delta o$ that flows through six stages---localization, proposal, scoring, guardrail, canary, commit---under governance (provenance, trust, deterministic rollback).}
\label{fig:architecture}
\vspace{-5mm}
\end{figure}

Figure~\ref{fig:architecture} illustrates the closed-loop architecture. The eight core APIs separating planning, grounding, control, adaptation, safety, and learning are listed in Appendix~\ref{app:apis}.

\textbf{Control loop.} At each planning step, the controller computes (see Appendix~\ref{app:signals} for signal computation details):
\begin{equation}
\text{pathway} = \textsc{Arb}(u, \pv, B) = \begin{cases}
\verifier & \text{if } \pv > \tau_p \;\land\; B \geq c_{\verifier} \\
\stwo & \text{if } u > \tau_u \;\land\; B \geq c_{\stwo} \\
\sone & \text{otherwise}
\end{cases}
\label{eq:arbitration}
\end{equation}
where $u \in [0,1]$ is uncertainty (grounding-completeness proxy), $\pv \in [0,1]$ is violation probability (precondition-gap heuristic), $B$ is remaining budget, and $c_{\sone} < c_{\verifier} < c_{\stwo}$ are pathway costs. Thresholds $\tau_u, \tau_p$ are tuned via reflection (exponential moving average; full reflection equations in Appendix~\ref{app:signals}).

\textbf{Verify-before-act.} When $\pv > \tau_p$, the verifier checks the next-$h$ operators ($h=3$):
\begin{equation}
\textsc{Verify}(\pi, \Sigma, h) = \begin{cases}
\texttt{allow} & \text{if } \forall i \in [1,h] : \Sigma \models \text{pre}(\pi_i) \\
\texttt{repair} & \text{if } \exists i : \Sigma \not\models \text{pre}(\pi_i) \;\land\; \text{LocalFix} \\
\texttt{block} & \text{otherwise}
\end{cases}
\label{eq:verify}
\end{equation}
Local repairs substitute trusted alternative operators or apply micro-patches from $\mathcal{R}_{\text{exp}}$, preserving plan semantics within $\leq 2$ operator swaps. LRU caching achieves $\approx 60\%$ hit rate, reducing overhead to $<$5ms per check. End-to-end latency breakdown for all pathways is in Appendix~\ref{app:complexity}.

\vspace{-3mm}
\looseness=-1

\section{Failure-Driven Knowledge Acquisition and Governance}
\label{sec:fdka}

\vspace{-3mm}
\looseness=-1

FDKA converts execution failures into verifiable edits through six stages: (1)~localization, (2)~proposal, (3)~scoring, (4)~guardrail validation, (5)~canary testing, and (6)~commit or rollback. A concrete patch example and an end-to-end walkthrough are in Appendix~\ref{app:patch-example} and Appendix~\ref{app:walkthrough}; algorithmic specification is in Appendix~\ref{app:algorithms}.

\textbf{Localization.} Given failure trace $\tau_t = \langle s_{0:t}, a_{0:t}, o, \Sigma_t, \epsilon_t \rangle$, responsibility scores operators via:
\begin{equation}
r(o' \mid \tau_t) \propto \exp(\phi(\tau_t, o')^\top w),
\label{eq:responsibility}
\end{equation}
where $\phi$ encodes symbolic deltas, tool log similarity, and parser confidence. The top-ranked operator becomes the patch target.

\textbf{Constrained generation.} \textsc{ProposeEdit} is a three-stage pipeline treating the LLM strictly as a code generator: (1)~serialize the trace to a structured JSON prompt; (2)~generate with a closed schema restricting edits to three typed categories---precondition additions, effect refinements, and tool-schema updates---at temperature $T=0.3$; (3)~deterministically parse and type-check into $\Delta o = \langle\text{scope}, \text{predicate}, \text{target}, \text{action}\rangle$. No model weights are updated; all acceptance logic is symbolic. The full three-stage pipeline with figures is in Appendix~\ref{app:propose-edit}.

\textbf{Multi-dimensional scoring.} Four dimensions, aggregated as:
\begin{equation}
\textsc{Score}(\Delta o) = w_{\text{plaus}} s_{\text{plaus}} + w_{\text{cons}} s_{\text{cons}} + w_{\text{util}} s_{\text{util}} - w_{\text{risk}} s_{\text{risk}} - \lambda_{\text{budget}},
\label{eq:score}
\end{equation}
with weights $(0.40, 0.25, 0.25, 0.10)$ and a small budget penalty $\lambda_{\text{budget}} \ge 0$ when regulation cost exceeds the configured budget. \textit{Plausibility} uses calibrated log-probability differences (Spearman $\rho = 0.83$ with human judgments). \textit{Consistency} applies dual-mode verification: fast symbolic heuristic ($<$5ms) and optional Z3 SMT check (20--50ms). \textit{Utility} counts failure prevention via counterfactual replay over $k=20$ retrieved traces. \textit{Risk} combines value-violation probability and blast radius. Individual sub-equations for all four dimensions, plus the EDCR probabilistic pre-filter, are in Appendix~\ref{app:propose-edit}.
Risk here is a soft preference term; guardrails below are hard symbolic vetoes and therefore non-redundant.

\textbf{Guardrails.} Two veto gates enforce deontic and structural constraints independently; a single veto blocks the patch regardless of score. The value guardrail queries $\kg^{\text{val}}$ (deontic rules encoded as $\langle\text{action}, \text{modality}, \text{condition}\rangle$ triples where modality $\in \{\text{Obligatory}, \text{Prohibited}, \text{Permitted}\}$):
\begin{equation}
\text{val-veto}(\Delta o) = \begin{cases}
\texttt{true} & \text{if } \exists r \in \mathcal{R}_a : r.\text{mod} = \textsc{Prohibited} \land \Delta o \text{ enables } a \\
\texttt{true} & \text{if } \exists r \in \mathcal{R}_a : r.\text{mod} = \textsc{Obligatory} \land \Delta o \text{ weakens } r.\text{cond} \\
\texttt{false} & \text{otherwise}
\end{cases}
\label{eq:val-guard}
\end{equation}
The causal guardrail uses identifiability $\iota$ and normalized impact propagation $\eta$:
\begin{equation}
\text{caus-veto}(\Delta o) = (\iota < \tau_{\text{ident}}) \lor (\eta > \tau_{\text{impact}}),
\label{eq:caus-guard}
\end{equation}
with $\tau_{\text{ident}} = 0.5$ and $\tau_{\text{impact}} = 0.6$. Full guardrail context and threshold rationale are in Appendix~\ref{app:propose-edit}.

\textbf{Staging, canary testing, and rollback.} Patches passing scoring and auto-approval enter canary testing (up to $n=8$ sandboxed runs; when fewer examples are available, a low-power mode evaluates the available examples/suite and rejects on any observed failure; strict-mode pass threshold $\geq 80\%$). This stage catches distributional mismatch that replay-based scoring may miss. Upon commit, a deterministic rollback set is computed and trust initializes via Beta--Bernoulli priors ($\rho_0 = 2/3$). Patches with $\rho < 0.3$ over 10 tasks trigger automatic rollback. The CSR canary equation, HITL gate equation, trust score equation, and commit equation are all in Appendix~\ref{app:governance-detail}. The \textbf{structural ratchet} property---once a valid precondition is installed, that failure mode becomes difficult to repeat under unchanged upstream conditions, thereby reducing recurrence within the current operator abstraction---is discussed in Appendix~\ref{app:surgical}.

\textbf{Governance pipeline.} Every committed patch carries a provenance record (source, inputs, rationale, timestamp) enabling forensic trace-back; all edits are versioned with deterministic rollback sets. A conflict-aware ledger \citep{zhang2150oneedit} detects coverage and reverse conflicts via edit-key hashing, resolving or escalating as needed. Patches must clear scoring \textit{and} guardrails \textit{and} canary testing before deployment---no single mechanism is a single point of failure. Formal equations (provenance, rollback, conflict detection, trust) are in Appendix~\ref{app:governance-detail}. In the evaluated runs, no rollbacks were observed.

\vspace{-3mm}
\looseness=-1

\section{Evaluation}
\label{sec:empirical}

\vspace{-3mm}
\looseness=-1

\subsection{Setup}
\label{sec:setup}

\vspace{-3mm}
\looseness=-1

\textbf{Domains.} \textit{Travel planning}: 25 tasks, failure rate 70\%, seed=42 (hard difficulty). \textit{Travel stochastic}: 25 tasks with transient failures and policy shifts (hard difficulty). \textit{E-commerce}: 25 tasks (order processing, payment validation, inventory management; easy difficulty as configured). \textit{ITSM}: 25 tasks (access provisioning, patch deployment, credential reset, ticketing; hard difficulty). All runs initialize with empty $\mathcal{R}_{\text{exp}}$ and 12 operators. Evaluation uses both single-seed and 3-seed protocols; each table states its horizon and seed regime explicitly. Full dataset specification is in Appendix~\ref{app:evaluation}.

\textbf{Implementation.} Patch synthesis uses GPT-4o-mini ($T=0.3$, max 512 tokens). Thresholds: $\theta=0.18$, $\tau_{\text{impact}}=0.6$, $\tau_{\text{conf}}=0.5$, $\tau_u=0.25$, $\tau_p=0.20$, budget $B=5$s. Scoring weights: plausibility 0.40, consistency 0.25, utility 0.25, risk 0.10. All headline results use OpenAI-family models via the released code; a single Anthropic Haiku 4.5 spot-check is reported at the end of \S\ref{sec:results}.

\textbf{Baselines and ablations.} \textit{Static-NS}: neuro-symbolic with fixed operators, no learning. \textit{LLM-Reflect}: LLM with cross-episode textual reflection memory (FIFO, max 20 episodes); GPT-4o-mini calls. \textit{Verify-Only}: neuro-symbolic with verification but no FDKA. \textit{ReAct} \citep{yao2023react}: per-step Thought--Action--Observation loop using GPT-4o-mini; no cross-episode memory or knowledge updates---failures recur across episodes. Ablations systematically remove Governance, Verify, Arbitration, and FDKA. In the main $-$Governance ablation we disable value and causal gates while retaining canary/rollback, so the deployment guard remains empirical; a separate governance-stress config pair fully disables governance to isolate deployment gating on auth-sensitive schema edits. Full system configurations are tabulated in Appendix~\ref{app:evaluation}.

A formal TTA convergence bound appears in Appendix~\ref{app:proofs}. In the runs reported here, observed TTA values for successfully patched failure classes ranged from 0 to 15 tasks depending on scenario and failure type.

\vspace{-3mm}
\looseness=-1

\subsection{Results}
\label{sec:results} 

\vspace{-3mm}
\looseness=-1

\textbf{Baseline comparison.} In single-seed travel planning (25 tasks, seed=42; full table in Appendix~\ref{app:baseline-table}), \sys achieves \textbf{100\% SR} with 1 accepted patch and zero terminal repeat failures. ReAct and Reflexion \citep{shinn2023reflexion} match 100\% SR via intra-episode retry but commit no structural repairs; LLM-Reflect (GPT-4o-mini) reaches 88\% with 4\% permanent failures; non-adaptive baselines plateau at 30--49\%.

\textbf{Direct multi-seed comparison.} Table~\ref{tab:multiseed} reports a controlled comparison using a common OpenAI model family across 3 agents, 3 scenarios, and 3 seeds (27 runs total). On both travel domains, all three agents reach 100\% or near-100\% success, but only \sys commits persistent symbolic repairs: 1.0$\pm$0.0 accepted patches in travel planning and travel stochastic, versus 0.0 for ReAct and Reflexion in every run. In e-commerce, Reflexion reaches the highest episodic SR at 98.7$\pm$1.9\%, but commits zero structural repairs. \sys reaches 94.7$\pm$2.3\% SR---outperforming ReAct (78.7$\pm$2.3\%)---while committing 2.7$\pm$0.6 patches per run spanning three distinct edit categories (schema updates, precondition additions, effect refinements). The stress test helps distinguish this gap: Reflexion's additional episodic SR comes from within-episode workarounds rather than persistent operator repair.

\textbf{E-commerce stress test (3 seeds).} To isolate cross-domain adaptation, we ran a controlled 14-task stress test (8 prefix + 6 holdout, all order-placement dominant) with 100\% injected tool-schema-drift failures. \sys commits one schema-update patch in every seed with TTA=0 (patch accepted at the first failure encounter), reducing holdout order-placement failures to \textbf{0\%} across all three seeds. ReAct achieves 71.4\% mean SR with 83.3\% holdout target-failure rate. Reflexion reaches 97.6$\pm$3.4\% overall SR and 94.4$\pm$7.9\% holdout SR via within-episode workarounds---it successfully completes tasks by alternative LLM-generated paths---yet still leaves 72.2$\pm$15.7\% holdout order-placement failures unresolved, consistent with the view that episodic reflection does not eliminate the underlying operator fault. Among the evaluated systems, only \sys commits a persistent repair that reduces recurrence (Figure~\ref{fig:ratchet}b).

\begin{table}[t]
\vspace{-2mm}
\centering
\small
\setlength{\tabcolsep}{4pt}
\renewcommand{\arraystretch}{0.94}
\caption{\textbf{Direct OpenAI comparison} (mean$\pm$std over 3 seeds per cell; 27 runs total). Among the evaluated systems, only \sys commits validated persistent repairs (2.7$\pm$0.6 patches in e-commerce spanning 3 distinct edit categories). Reflexion can exceed episodic SR in e-commerce but commits zero structural repairs.}
\vspace{-3mm}
\label{tab:multiseed}
\begin{tabular}{llccc}
\toprule
\textbf{Scenario} & \textbf{System} & \textbf{SR (\%)} & \textbf{RFR$_\text{term}$} & \textbf{Accepted patches} \\
\midrule
Travel planning   & \sys       & 100.0$\pm$0.0 & 0.0$\pm$0.0 & \textbf{1.0$\pm$0.0} \\
Travel planning   & ReAct      & 100.0$\pm$0.0 & 0.0$\pm$0.0 & 0.0$\pm$0.0 \\
Travel planning   & Reflexion  & 100.0$\pm$0.0 & 0.0$\pm$0.0 & 0.0$\pm$0.0 \\
\midrule
Travel stochastic & \sys       & 100.0$\pm$0.0 & 0.0$\pm$0.0 & \textbf{1.0$\pm$0.0} \\
Travel stochastic & ReAct      & \ul{97.3$\pm$2.3}  & 0.0$\pm$0.0 & 0.0$\pm$0.0 \\
Travel stochastic & Reflexion  & 100.0$\pm$0.0 & 0.0$\pm$0.0 & 0.0$\pm$0.0 \\
\midrule
E-commerce        & \sys       & \ul{94.7$\pm$2.3} & \textbf{1.3$\pm$2.3} & \textbf{2.7$\pm$0.6} \\
E-commerce        & ReAct      & 78.7$\pm$2.3  & 13.3$\pm$2.3   & 0.0$\pm$0.0 \\
E-commerce        & Reflexion  & \textbf{98.7$\pm$1.9} & 0.0$\pm$0.0   & 0.0$\pm$0.0 \\
\bottomrule
\end{tabular}
\vspace{-3mm}
\end{table}

\textbf{Recurring-failure stress test.} Table~\ref{tab:stress} isolates persistent adaptation from episodic retry. The stress suites are controlled synthetic injections designed for seeded reproducibility, not replayed production logs. We constructed a targeted 12-task travel split with 3 early API-drift exposures on the flight-booking operator, 3 filler tasks, and then the same failure class reintroduced on 6 later holdout tasks. All agents solved the holdout tasks episodically, but only \sys suppressed the target failure itself: the holdout target-failure rate falls to 0.0$\pm$0.0 after one committed patch, whereas ReAct, Reflexion, and the textual-memory baseline LLM-Reflect all remain at 100.0$\pm$0.0 despite 100.0$\pm$0.0 holdout SR --- cross-episode memory alone does not repair the underlying operator. These results are consistent with the claim that \sys modifies environment-facing process knowledge rather than merely recovering within episode (Figure~\ref{fig:ratchet}a).

\begin{table}[t]
\centering
\small
\setlength{\tabcolsep}{4pt}
\renewcommand{\arraystretch}{0.94}
\caption{\textbf{Recurring-failure stress test} (mean$\pm$std over 3 seeds). The same flight-booking API-drift class is forced in an adaptation prefix and then reintroduced on 6 holdout tasks. Holdout failure rate measures whether the target failure is still observed on those later tasks.}
\vspace{-3mm}
\label{tab:stress}
\begin{tabular}{lcccc}
\toprule
\textbf{System} & \textbf{SR (\%)} & \textbf{Holdout SR (\%)} & \textbf{Holdout fail rate} & \textbf{Accepted patches} \\
\midrule
\sys        & 100.0$\pm$0.0 & 100.0$\pm$0.0 & \textbf{0.0$\pm$0.0}   & \textbf{1.0$\pm$0.0} \\
ReAct       & \ul{94.4$\pm$9.6}  & 100.0$\pm$0.0 & 100.0$\pm$0.0 & 0.0$\pm$0.0 \\
Reflexion   & 100.0$\pm$0.0 & 100.0$\pm$0.0 & 100.0$\pm$0.0 & 0.0$\pm$0.0 \\
LLM-Reflect & 100.0$\pm$0.0 & 100.0$\pm$0.0 & 100.0$\pm$0.0 & 0.0$\pm$0.0 \\
\bottomrule
\end{tabular}
\vspace{-3mm}
\end{table}

\begin{figure}[t]
\centering
\includegraphics[width=0.85\linewidth, height=0.26\textheight, keepaspectratio]{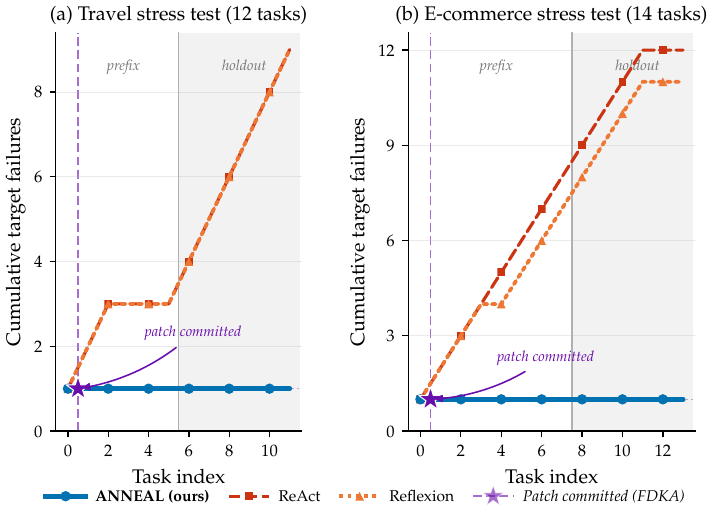}
\caption{\textbf{Adaptation curve (ratchet effect).} Cumulative target-class failures versus task index, from real per-task metrics (seed~7; all seeds identical under deterministic failure injection). (a)~Travel stress (12 tasks): \sys patches BookFlight API-drift on first encounter (TTA=0); target failures plateau at~1 while baselines accumulate 9 across prefix and holdout. (b)~E-commerce stress (14 tasks): \sys patches PlaceOrder tool-schema drift on first encounter; holdout failures remain at zero versus 11--12 for baselines. The plateau after the $\star$ marker is the signature of persistent structural repair.}
\label{fig:ratchet}
\vspace{-3mm}
\end{figure}

\textbf{Capability retention.} Because every committed patch monotonically tightens an operator, we tested whether unrelated valid behavior is preserved after patches are applied. Across three probes of increasing stringency---single cross-operator edit, single same-operator edit on valid inputs, and 2--3 cumulative heterogeneous edits across three operators---and 3 seeds, pre- and post-patch success rates are identical on every probe ($\Delta = +0.0$\,pp; 30/30 $\to$ 30/30 per condition). The probe explicitly includes patched operators on valid inputs: after an \texttt{ADD\_PRECONDITION} tightens \texttt{ApplyPromoCode}, a valid \texttt{SAVE10} promo task still succeeds, so monotonic tightening is a no-op on valid plans. Full per-probe table in Appendix~\ref{app:evaluation} (Table~\ref{tab:retention}).

\textbf{Paraphrased same-root holdout.} To test generalization beyond verbatim re-occurrence, we added a paraphrased holdout split where prefix and holdout instructions differ lexically (max token Jaccard $= 0.25$, mean $0.11$) but share the same \texttt{PlaceOrder} tool-schema-drift root cause. After one committed patch, \sys leaves \textbf{0/18} holdout target failures, versus \textbf{7/18} for Reflexion (despite 100\% holdout SR) and \textbf{11/18} for ReAct (full table in Appendix~\ref{app:evaluation}, Table~\ref{tab:variant}). This is paraphrased same-root recurrence at the instruction surface, not renamed-field or sibling-operator variation, which remain out of scope under the content-addressed \texttt{edit\_key} design (see \S\ref{sec:discussion}).

\textbf{Recovery-source decomposition and significance.} To clarify where episodic success comes from, we decompose each successful episode into three sources: (i)~\textit{no failure observed}---the operator handled the task on first pass; (ii)~\textit{post-patch recovery}---the failure class had already been repaired by a previously-committed structural edit; and (iii)~\textit{pre-patch within-episode recovery}---verify-before-act or local repair handled the first encounter without a commit. On routine 3-seed runs, the per-domain mean split is \textbf{92.9\,/\,7.1\,/\,0.0\%} (e-commerce) and \textbf{83.3\,/\,8.3\,/\,8.3\%} (travel). Bootstrap percentile tests ($B{=}10{,}000$) over the matched 3-seed paired draws give \textbf{$\Delta$SR$=+0.0$\,pp, $p=0.540$} (\sys vs.\ the strongest episodic baseline) but \textbf{$\Delta$holdout-failure-rate$=-83.3$\,pp, $p<0.001$} on the stress suites. Episode-level SR and persistent-repair effect are therefore separate axes; the structural-repair signal lives on the holdout-failure axis, not the headline-SR one. Per-source counts and 95\% bootstrap CIs are in Appendix~\ref{app:evaluation}.

\textbf{Ablation study.} Table~\ref{tab:ablation} reports component ablations across three domains (3 seeds each). In deterministic travel, removing FDKA drops SR by 4.0\,pp; other components are modular (0.0\,pp) because the failure schedule produces neither high uncertainty nor multi-step precondition chains. The stochastic variant injects compound policy shifts creating 3-violation repair chains that exceed the fast path's 2-hop budget; removing Arbitration drops SR by 4.7\,pp as the fast path exhausts its budget after two repairs while the slow path resolves all three via extended look-ahead. In e-commerce, removing FDKA drops SR by 26.7\,pp (the largest single-component impact) and eliminates all structural repairs. Removing Verify yields $+$4.0\,pp, reflecting constraint types where verify-before-act catches violations that the repair loop does not yet cover.

\begin{table}[t]
\centering
\small
\setlength{\tabcolsep}{4pt}
\renewcommand{\arraystretch}{0.94}
\caption{\textbf{Ablation study (5 conditions $\times$ 3 domains $\times$ 3 seeds each).} FDKA is essential: $-$4.0\,pp in deterministic travel; $-$26.7\,pp in e-commerce (largest single-component impact). Arbitration activates under stochastic noise ($-$4.7\,pp), where compound policy shifts create multi-violation repair chains that exceed the fast path's budget.}
\label{tab:ablation}
\begin{tabular}{lccccc}
\toprule
\textbf{Config.} & \textbf{SR (\%)} & \textbf{Patches} & \textbf{Accept.} & \textbf{$\Delta$SR} & \textbf{Reading} \\
\midrule
\multicolumn{6}{l}{\textit{Travel planning (deterministic, 3-seed mean$\pm$std, seeds 7/13/31)}} \\
\sys-Full      & 100.0{\footnotesize$\pm$0.0} & 1.0 & 100\% & ---      & Baseline \\
$-$Governance  & 100.0{\footnotesize$\pm$0.0} & 1.0 & 100\% & 0.0\,pp    & Modular \\
$-$FDKA        & 96.0{\footnotesize$\pm$0.0}  & 0   & ---   & $-$4.0\,pp & \textbf{Essential} \\
$-$Verify      & 100.0{\footnotesize$\pm$0.0} & 1.0 & 100\% & 0.0\,pp    & Modular \\
$-$Arbitration & 100.0{\footnotesize$\pm$0.0} & 1.0 & 100\% & 0.0\,pp    & Modular \\
\midrule
\multicolumn{6}{l}{\textit{Travel stochastic (25 tasks, noise-injected, 3-seed mean$\pm$std)}} \\
\sys-Full      & 100.0{\footnotesize$\pm$0.0} & 1.0 & 100\% & ---        & Baseline \\
$-$Arbitration & 95.3{\footnotesize$\pm$2.3}  & 1.0 & 100\% & $-$4.7\,pp   & \textbf{Active} \\
$-$Verify      & 100.0{\footnotesize$\pm$0.0} & 1.0 & 100\% & 0.0\,pp      & Modular \\
$-$FDKA        & 100.0{\footnotesize$\pm$0.0} & 0.0 & ---   & 0.0\,pp      & Modular \\
\midrule
\multicolumn{6}{l}{\textit{E-commerce (3-seed mean$\pm$std)}} \\
\sys-Full      & 94.7{\footnotesize$\pm$2.3} & 2.7 & 100\% & ---            & Baseline \\
$-$Governance  & 94.7{\footnotesize$\pm$2.3} & 2.7 & 100\% & 0.0\,pp          & Modular \\
$-$FDKA        & 68.0{\footnotesize$\pm$4.0} & 0.0 & ---   & $-$26.7\,pp      & \textbf{Essential} \\
$-$Verify      & 98.7{\footnotesize$\pm$2.3} & 2.7 & 100\% & $+$4.0\,pp       & Residual \\
$-$Arbitration & 94.7{\footnotesize$\pm$2.3} & 2.7 & 100\% & 0.0\,pp          & Modular \\
\bottomrule
\end{tabular}
\vspace{-4mm}
\end{table}

The ablation reveals a principled activation hierarchy. FDKA is essential across all domains because persistent knowledge gaps appear in each setting; Arbitration activates under stochastic noise where compound disruptions create multi-hop repair chains that overwhelm the fast path, and is dormant in deterministic domains where violation probability stays below the escalation threshold. In the routine benchmark, governance is intentionally quiet because accepted patches are low-risk local repairs; its role there is to certify safe commits without reducing headline SR on already-safe edits. On these routine suites, many recoveries still come from verify-before-act and local repair rather than from committed structural edits (Appendix~\ref{app:evaluation}, Table~\ref{tab:per-failure}), so the stress-test attribution should not be overgeneralized.

\textbf{Cross-domain transfer (ITSM).} Extending to a third domain under the same untuned multi-seed protocol, ITSM reaches $100.0\pm0.0\%$ SR, $100.0\pm0.0\%$ CSR, and $1.0\pm0.0$ accepted patches over 3 seeds, indicating that the current operator library transfers beyond the travel and e-commerce settings reported in Table~\ref{tab:multiseed} (full four-scenario table in Appendix~\ref{app:cross-scenario-table}).

\textbf{Governance validation.} The absence of observed rollbacks is consistent with defense-in-depth in the routine runs. To isolate governance selectivity directly, we add a dedicated e-commerce governance-stress suite: six fixed order-placement tasks all force auth-schema drift on \texttt{PlaceOrder}. Under full governance, the synthesized migration from legacy \texttt{auth\_token} to \texttt{signed\_session\_token} is escalated before commit on every task (6 proposals, 6 escalations, 0 commits, 0/6 recovered tasks). Under a governance-off stress ablation, the same repair is committed on the first task and suppresses the remaining failures (1 proposal, 1 commit, 6/6 recovered tasks). This suite is not intended to maximize average-case SR; it shows that, in this setting, security-relevant schema edits are escalated rather than silently committed, addressing the low-risk bias of the routine benchmark. A complementary synthetic 8-patch governance suite still achieves \textbf{100\% decision accuracy}; both tables appear in Appendix~\ref{app:governance-detail}; additional governance notes appear in Appendix~\ref{app:governance-notes}.

\textbf{Governance breadth.} To check that governance decisions are not specialized to a single fault class, the 8-patch suite spans four edit categories (schema migration, precondition tightening, effect refinement, value-rule violation) and four risk levels; all 8 decisions match the held-out expert label (3 commit, 3 escalate, 2 veto), with no silent commits on either value-violation case (Appendix~\ref{app:governance-detail}).

\textbf{Cross-model spot-check.} Re-running the e-commerce stress test under Anthropic Claude Haiku 4.5 as the proposal LLM (3 seeds; symbolic stack fixed) reproduces the GPT-4o-mini result exactly: SR $100.0\pm0.0\%$, holdout SR $100.0\pm0.0\%$, holdout fail rate $0.0\pm0.0\%$ (0/18), one accepted schema patch per seed at TTA$=$0, and the two models converge on the same \texttt{edit\_key}. This is a single-pair spot-check (per-seed table in Appendix~\ref{app:evaluation}), not a broad cross-model study.

\vspace{-3mm}
\looseness=-1

\section{Discussion and Limitations}
\label{sec:discussion}

\vspace{-3mm}
\looseness=-1

\textbf{Scope.} The current evidence supports a narrow claim about persistent symbolic repair, not broad general superiority. The strongest results are in recurring operator and tool-schema failures, where \sys commits validated edits and suppresses later recurrence across travel, e-commerce stress, and the untuned 3-seed ITSM extension under the released operator library; the evidence supports operator-centric transfer, not arbitrary-domain autonomy. A central observation is that high overall SR is insufficient evidence of adaptation: Reflexion reaches 97.6$\pm$3.4\% SR on the e-commerce stress test via within-episode workarounds, yet retains 72.2$\pm$15.7\% holdout target-failure rates---the underlying operator fault persists.

\textbf{Edit-space and \texttt{edit\_key} coverage.} FDKA is currently limited to operator-level edits (schema, precondition, effect); constraint-level edits, workflow restructuring, and operator synthesis for entirely novel fault classes are out of scope, and most cross-domain recoveries still come from verify-before-act and local repair rather than committed structural edits. Committed patches are keyed by a content-addressed \texttt{edit\_key} over operator name, argument schema hash, and target field: this suppresses re-proposal on identical roots (and underwrites the paraphrased same-root holdout) but does \textit{not} generalize across sibling operators with renamed fields or different schema hashes---each sibling requires its own patch, and cross-sibling transfer remains future work.

\textbf{Long-horizon coverage.} Evaluated tasks involve up-to-15-task adaptation horizons within each scenario; persistent repair under genuinely long-horizon planning---deep dependency chains, multi-day compound shifts---is not yet benchmarked, and the TTA bound (Appendix~\ref{app:proofs}) is asymptotic rather than finite-horizon.

\textbf{Component activation.} The routine benchmark is dominated by low-risk local repairs, so governance contributes primarily as deployment gating rather than as a top-line SR gain; the metacognitive arbitrator shows measurable impact only under stochastic noise ($-$4.7\,pp). Scaling the value and causal knowledge graphs, multi-provider comparisons, and richer typed edit spaces are the most important next steps; additional future directions are in Appendix~\ref{app:future}.

\vspace{-3mm}
\looseness=-1

\section{Conclusion}
\label{sec:conclusion}

\vspace{-3mm}
\looseness=-1

We presented \sys, a governed neuro-symbolic system that converts recurring agent failures into persistent symbolic repairs of process knowledge without modifying model weights. Across four domains and 27 multi-seed runs, episodic baselines retain 72--100\% holdout failure rates on recurring faults whereas \sys's committed patches reduce these to 0\%. The contribution is narrow but concrete---auditable typed symbolic repair with explicit guardrails, provenance, and rollback---and provides a foundation for richer edit spaces, multi-agent coordination, and audit-required deployment.

\section*{Ethics Statement}
\vspace{-3mm}
\looseness=-1
Self-evolving agents can introduce unsafe or non-compliant updates. \sys mitigates this risk through explicit guardrails, canary testing, provenance, rollback, and optional human review. The current experiments are sandboxed simulations and do not involve deployment on human subjects or safety-critical infrastructure.

\section*{Reproducibility Statement}
\vspace{-3mm}
\looseness=-1
Domains, baselines, model settings, thresholds, scoring weights, metrics, and fixed seeds are specified in the main paper and appendix. Code, configurations, and runnable suites for all reported experiments are released at \url{https://github.com/sbhakim/anneal-agents}.

\bibliographystyle{anneal_style}
\bibliography{references}

\appendix

\section{Problem Statement and Scope}
\label{app:problem}

We target \textit{open-world instructability}: environments where tasks are long-horizon, tools and APIs evolve (schema changes, policy updates), norms shift (new regulations, organizational flips), and novel entities appear (out-of-distribution requests, unseen edge cases). Concrete examples include enterprise scheduling under dynamic policies, multi-step travel planning with evolving booking APIs, and conversational assistants managing heterogeneous tool ecosystems.

Critically, we do not modify foundation model weights. Adaptation occurs exclusively through governed edits to the symbolic layer: operators (preconditions, effects, cost models), tool schemas, constraints, and thresholds. This design is motivated by practical constraints---continual fine-tuning risks catastrophic forgetting \citep{bell2025future}---and governance requirements: auditable, reversible changes with provenance. Our goal is to measure Time-to-Adapt (TTA)---the latency from observing a new failure class to achieving sustained improvement---as a first-class performance metric with provable bounds (Theorem~\ref{thm:tta-bound} in Appendix~\ref{app:proofs}), positioning governed symbolic repair as a viable alternative to model retraining in production settings.

\paragraph{Background notes.}
Robust instruction following benefits from neuro-symbolic decomposition: instructions compile into Hierarchical Task Networks (HTN) by consulting a Process Knowledge Graph (PKG) encoding typed operators \citep{kwon2025fast,kwon2025neuro}. Metacognitive frameworks distinguish fast heuristic reasoning (S1) from slower deliberative reasoning (S2), aggregating signals to regulate interventions within resource budgets \citep{bergamaschi2025fast,wei2024metacognitive}. Verify-before-act mechanisms use satisfiability checking to prevent infeasible plans \citep{yanfangzhou2025metagent}. However, these systems use \textit{fixed} operators, so failures recur as process knowledge remains static without repair.

Traditional continual learning updates model weights \citep{Wang2024KnowledgeEditing}, but fine-tuning is slow, expensive, and poses governance challenges \citep{bell2025future}. Neural-symbolic knowledge editing \citep{zhang2150oneedit,zhao2025clause} emphasizes stable edit keys, conflict detection, and provenance. \sys instead edits symbolic knowledge while keeping foundation model weights frozen, using the LLM as a constrained code generator rather than a learned policy.

\section{System APIs and Operational Signals}
\label{app:apis}

The architecture exposes eight core APIs enabling modular composition:
\begin{itemize}[leftmargin=*,itemsep=1pt]
\item $\Pi(\text{instruction}, \text{PKG}) \to \pi$: Compile instruction into HTN plan
\item $\textsc{Parse}(\pi, \Sigma) \to \pi_{\text{ground}}$: Ground symbolic parameters to entities
\item $\textsc{Arb}(u, \pv, B) \to \{\sone, \stwo, \verifier\}$: Arbitrate pathway under budget $B$
\item $\textsc{Verify}(\pi, \Sigma, h) \to \{\texttt{allow}, \texttt{repair}, \texttt{block}\}$: Check next-$h$ operators
\item $\textsc{ProposeEdit}(\tau, \Sigma) \to \Delta o$: Synthesize patch from failure trace
\item $\textsc{Score}(\Delta o, \mathcal{R}_{\text{exp}}) \to [0,1]$: Multi-dimensional scoring
\item $\textsc{Guardrails}(\Delta o, \kg^{\text{val}}, \kg^{\text{cau}}) \to \{\texttt{allow}, \texttt{veto}\}$: Constraint validation
\item $\textsc{Reflect}(\text{outcomes}, \tau_u, \tau_p) \to \tau'_u, \tau'_p$: Threshold tuning
\end{itemize}

Operational signals used throughout are summarized in Table~\ref{tab:signals}.

\begin{table}[h]
\centering
\caption{Operational signals for metacognitive control, arbitration, and scoring.}
\label{tab:signals}
\begin{tabular}{lll}
\toprule
\textbf{Signal} & \textbf{Range} & \textbf{Purpose} \\
\midrule
$u$ & $[0,1]$ & Uncertainty; triggers S2 escalation \\
$\pv$ & $[0,1]$ & Violation probability; triggers Verify \\
$p_{\text{gap}}$ & $[0,1]$ & Fraction of unsatisfied preconditions in next-$h$ operators \\
$i_{\text{dist}}$ & $\mathbb{R}^+$ & Min edit distance to invariant violation \\
$\rho$ & $[0,1]$ & Beta--Bernoulli trust score for patch reliability \\
\bottomrule
\end{tabular}
\end{table}

\section{Signal Computation and Arbitration}
\label{app:signals}

\textbf{Uncertainty via token-level entropy.} For sequence $y = (y_1, \ldots, y_L)$:
\begin{equation}
H_i = -\sum_{v \in \text{top-}k} p(v \mid y_{<i}, x) \log p(v \mid y_{<i}, x), \quad u = \min\!\left\{1, \frac{H(y|x)}{\bar{H}}\right\},
\end{equation}
where $\bar{H}$ is an exponential moving average over 1000 tasks ($\alpha=0.01$).

\textbf{Violation probability via logistic regression.} A 10-dimensional feature vector $d$ (precondition gap, invariant distance, tool health, novelty, recent violations, plan depth, budget slack, $u$, value impact, operator diversity) feeds a calibrated logistic regression: $\pv = \sigma(w^\top d / T_{\text{cal}})$.

\textbf{Reflection and threshold tuning.} Over the last $n=100$ tasks:
\begin{align}
\tau'_u &= (1-\alpha)\tau_u + \alpha \cdot \text{Quantile}_{0.8}(\{u_i : \text{outcome}_i = \text{failure}\}) \\
\tau'_p &= (1-\alpha)\tau_p + \alpha \cdot \text{Quantile}_{0.8}(\{p_{\text{viol},i} : \text{outcome}_i = \text{failure}\})
\end{align}
with $\alpha = 0.01$ to prevent oscillation. If $\text{SR}_{\sone} < 0.7$, thresholds increase; if $\text{SR}_{\stwo} > 0.95$, thresholds decrease.

\section{Detailed Algorithms and Complexity}
\label{app:algorithms}

\begin{algorithm}[h]
\caption{Main Control Loop with Metacognitive Arbitration}
\label{alg:main-loop}
\small
\begin{algorithmic}[1]
\Require Instruction $I$, initial state $\Sigma_0$, budget $B$, PKG, $\mathcal{R}$
\Ensure Task outcome, updated $\mathcal{R}_{\text{rules}}$
\State $\pi \gets \Pi(I, \text{PKG})$; $\pi_{\text{ground}} \gets \textsc{Parse}(\pi, \Sigma_0)$
\State $\Sigma \gets \Sigma_0$, $B_{\text{remain}} \gets B$
\While{$\pi_{\text{ground}} \neq \emptyset \wedge B_{\text{remain}} > 0$}
    \State $u \gets \textsc{Uncertainty}(\pi_{\text{ground}}, \Sigma)$; $\pv \gets \textsc{ViolationProb}(\pi_{\text{ground}}, \Sigma)$
    \State $\text{pathway} \gets \textsc{Arb}(u, \pv, B_{\text{remain}})$
    \If{$\text{pathway} = \verifier$}
        \State $\text{verdict} \gets \textsc{Verify}(\pi_{\text{ground}}, \Sigma, h=3)$
        \If{$\text{verdict} = \texttt{block}$} $\pi_{\text{ground}} \gets \Pi(I, \text{PKG})$
        \ElsIf{$\text{verdict} = \texttt{repair}$} Apply local fix; $B_{\text{remain}} \mathrel{-}= c_{\verifier}$
        \EndIf
    \ElsIf{$\text{pathway} = \stwo$}
        \State $\pi_{\text{ground}} \gets \textsc{DeliberativePlan}(I, \Sigma)$; $B_{\text{remain}} \mathrel{-}= c_{\stwo}$
    \EndIf
    \State $o_t \gets \textsc{Head}(\pi_{\text{ground}})$; $(\Sigma', \text{status}) \gets \mathcal{E}(o_t, \Sigma)$
    \If{$\text{status} = \texttt{failure}$}
        \State $\Delta o \gets \textsc{FDKA}(\tau_t, \mathcal{R})$
        \If{$\Delta o \neq \texttt{null}$} $\mathcal{R}_{\text{rules}} \mathrel{\cup}= \{\Delta o\}$; $\pi_{\text{ground}} \gets \Pi(I, \text{PKG})$
        \Else{} \Return \texttt{failure}
        \EndIf
    \Else{} $\Sigma \gets \Sigma'$; $\pi_{\text{ground}} \gets \textsc{Tail}(\pi_{\text{ground}})$
    \EndIf
    \State $B_{\text{remain}} \mathrel{-}= c_{\sone}$
\EndWhile
\State $\textsc{Reflect}(\text{trajectory}, \tau_u, \tau_p)$; \Return \texttt{success}
\end{algorithmic}
\end{algorithm}

\begin{algorithm}[h]
\caption{FDKA Pipeline: From Trace to Committed Patch}
\label{alg:fdka-pipeline}
\small
\begin{algorithmic}[1]
\Require Failure trace $\tau_t$, pools $\mathcal{R}$, KGs $\kg^{\text{val}}$, $\kg^{\text{cau}}$
\Ensure Committed patch $\Delta o$ or \texttt{null}
\State $o^* \gets \textsc{Localize}(\tau_t, \mathcal{R}_{\text{rules}})$
\State $\Delta o \gets \textsc{ProposeEdit}(\tau_t, o^*)$
\If{$\Delta o = \texttt{null}$} \Return \texttt{null} \EndIf
\State $s \gets \textsc{Score}(\Delta o, \mathcal{R}_{\text{exp}})$
\If{$s < \theta$ \textbf{or} $\textsc{val-veto}(\Delta o)$ \textbf{or} $\textsc{caus-veto}(\Delta o)$} \Return \texttt{null} \EndIf
\State $\text{conflict} \gets \texttt{Ledger.CheckAndStage}(\Delta o)$
\If{$\text{conflict} = \texttt{reverse} \wedge \rho(\Delta o) < \tau_{\text{override}}$}
    \State $\textsc{QueueHuman}(\Delta o)$; \Return \texttt{null}
\EndIf
\If{$\textsc{Gate}(\Delta o) = \texttt{queue\_human}$} $\textsc{QueueForReview}(\Delta o)$; \Return \texttt{null} \EndIf
\If{$\textsc{CanaryTest}(\Delta o, \mathcal{R}_{\text{exp}}) \geq \tau_{\text{canary}}$}
    \State $\texttt{Ledger.Commit}(\Delta o)$; \Return $\Delta o$
\Else{} $\texttt{Ledger.Rollback}(\Delta o)$; \Return \texttt{null}
\EndIf
\end{algorithmic}
\end{algorithm}

\begin{algorithm}[h]
\caption{CheckAndStage: Conflict Detection \& Resolution}
\label{alg:ledger}
\small
\begin{algorithmic}[1]
\Procedure{CheckAndStage}{$\Delta o$, Ledger}
\State $\editkey \gets \textsc{SHA256}(\text{scope} \| \text{predicate} \| \text{subject})$
\State $\text{existing} \gets \texttt{Ledger.Lookup}(\editkey)$
\If{$\text{existing} \neq \texttt{null}$}
    \If{$\text{existing.target} \neq \Delta o.\text{target}$} \Comment{Coverage conflict}
        \State \texttt{PreRollback}(\editkey); \Return \texttt{coverage\_resolved}
    \ElsIf{$\text{Negates}(\text{existing}, \Delta o)$} \Comment{Reverse conflict}
        \If{$\rho(\Delta o) \geq \tau_{\text{override}}$} \texttt{PreRollback}(\editkey); \Return \texttt{reverse\_overridden}
        \Else{} \Return \texttt{reverse\_escalate\_human}
        \EndIf
    \EndIf
\EndIf
\State \texttt{Ledger.Stage}($\editkey$, $\Delta o$, $\mathcal{R}_{\Delta o}$, $\text{prov}(\Delta o)$); \Return \texttt{ok}
\EndProcedure
\end{algorithmic}
\end{algorithm}

When edit history for a key exceeds $\Khist = 50$ entries, auto-schedule consolidation: squash to canonical patch, reset trust prior, and require fresh canary validation.

\subsection{Complexity and Latency}
\label{app:complexity}

The principal computational overheads arise from five operations:
\begin{itemize}[leftmargin=*,itemsep=2pt]
\item \textbf{Planning $\Pi$:} $\mathcal{O}(b^d)$ with branching factor $b$ and depth $d$; pruning and caching yield $d \leq 5$ typical, 10--50ms latency.
\item \textbf{Verification \verifier:} Symbolic heuristic checks complete in $<$5ms; optional Z3/SAT solving incurs 20--50ms; $\mathcal{O}(n \log n)$ in predicates per operator ($n \sim 10$); LRU caching reduces invocations by $\approx$60\%.
\item \textbf{Arbitration:} Constant time; $u$ and $\pv$ computation $<$1ms each, total $<$5ms.
\item \textbf{ProposeEdit (LLM call):} 500 input tokens, 100 output tokens; 200--500ms depending on model; dominant latency cost of FDKA.
\item \textbf{Scoring:} Probabilistic pre-filter (20ms), consistency check ($<$5ms), utility retrieval (30ms), risk classification ($<$5ms), total 60--100ms.
\end{itemize}

\textbf{End-to-end latency.} Fast path (no failure): 50--200ms for planning and execution. FDKA path: ProposeEdit ($\approx$300ms) + scoring ($\approx$100ms) + canary testing ($\approx$500--800ms for 5--8 runs, depending on available evidence) $\approx$0.9--1.2s total overhead. The metacognitive controller enforces a regulate budget $B_{\text{regulate}}$ (default 2s): if patch evaluation threatens to exceed this, scoring suspends and the patch queues for asynchronous review.

\section{Detailed ProposeEdit Pipeline and Scoring Equations}
\label{app:propose-edit}

Figure~\ref{fig:architecture}(b) illustrates the three-stage constrained-generation pipeline used inside the \textit{proposal} stage of FDKA.

\textbf{Stage 1: Input serialization.} Only predicates relevant to the failure are included ($\leq 10$ symbols): \texttt{\{operator: \{name, params, preconditions, effects\}, state\_minimal: \{relevant\_symbols\}, error: \{type, message, evidence\}\}}.

\textbf{Stage 2: Constrained generation.} The system message enforces a closed schema admitting only three edit types: precondition additions, effect refinements, and tool-schema updates. Few-shot exemplars (3--5) from $\mathcal{R}_{\text{exp}}$ demonstrate valid patches. Temperature $T=0.3$.

\textbf{Stage 3: Parsing and type checking.} A deterministic parser validates JSON structure, extracts the edit type, and type-checks against the operator schema. Invalid outputs are rejected immediately. Valid patches normalize to $\Delta o = \langle\text{scope}, \text{predicate}, \text{target}, \text{action}\rangle$.

\subsection{Individual Scoring Sub-Equations}

\textbf{Plausibility} evaluates patch alignment with the LLM's learned distribution via calibrated log-probability differences:
\begin{equation}
s_{\text{plaus}} = \sigma\!\left( \frac{1}{|\Delta o|} \left[ \log p(\Delta o \mid \tau_t) - \log p(\texttt{null} \mid \tau_t) \right] \right),
\label{eq:plausibility}
\end{equation}
where $|\Delta o|$ is patch length in tokens and \texttt{null} is a minimal baseline edit. Length normalization prevents penalizing longer patches. Validation on 200 labeled patches yields Spearman $\rho = 0.83$ with human judgments.

\textbf{Consistency} verifies that the patched operator preserves logical integrity via dual-mode verification:
\begin{equation}
s_{\text{cons}} = \begin{cases}
1 & \text{if } \textsc{SAT}(\text{pre}(o') \land \Sigma_t \land \text{Inv}) \land \textsc{TypeCheck}(o') \\
0 & \text{otherwise}
\end{cases}
\label{eq:consistency}
\end{equation}
where $o' = o \cup \Delta o$ is the patched operator and $\text{Inv}$ is the conjunction of domain invariants. The fast symbolic heuristic ($<$5ms) runs by default; Z3 SMT solving (20--50ms) is invoked optionally for formal guarantees.

\textbf{Utility} estimates whether $\Delta o$ would have prevented past failures via counterfactual replay over $k=20$ retrieved traces:
\begin{equation}
s_{\text{util}} = \frac{n_{\text{prevent}} + 0.5 \cdot n_{\text{mitigate}}}{k},
\label{eq:utility}
\end{equation}
where $n_{\text{prevent}}$ counts fully averted failures and $n_{\text{mitigate}}$ counts partially mitigated ones. If $k < 3$ (cold-start), relax acceptance threshold or queue for human review.

\textbf{Risk} quantifies potential harm combining value-violation probability and blast radius:
\begin{equation}
s_{\text{risk}} = w_{\text{val}} \cdot q_{\text{val}} + w_{\text{blast}} \cdot b,
\label{eq:risk}
\end{equation}
with $w_{\text{val}} = 0.8$, $w_{\text{blast}} = 0.2$, where $q_{\text{val}} \in [0,1]$ is value-violation probability and $b = n_{\text{affected}} / |\mathcal{R}_{\text{rules}}|$ is the blast radius fraction. High-risk patches ($s_{\text{risk}} > 0.6$) trigger human-in-the-loop gates.

\textbf{EDCR probabilistic pre-filter.} Before full scoring, a lightweight pre-filter ensures the failure class is systematic enough to warrant a patch:
\begin{equation}
\hat{P}_{\text{err}} \geq 1 - P_\alpha,
\label{eq:edcr}
\end{equation}
where $\hat{P}_{\text{err}}$ is the empirical error rate for the failure class and $P_\alpha$ is the baseline residual (errors not preventable by symbolic edits, e.g., transient tool outages; typical $P_\alpha = 0.25$). This filter rejects patches for rare, non-systematic errors, reducing noise in the rule pool.

\textbf{Scoring geometry visualization.} Figure~\ref{fig:scoring} visualizes the four scoring axes.
\begin{figure}[h]
\centering
\includegraphics[width=0.55\linewidth]{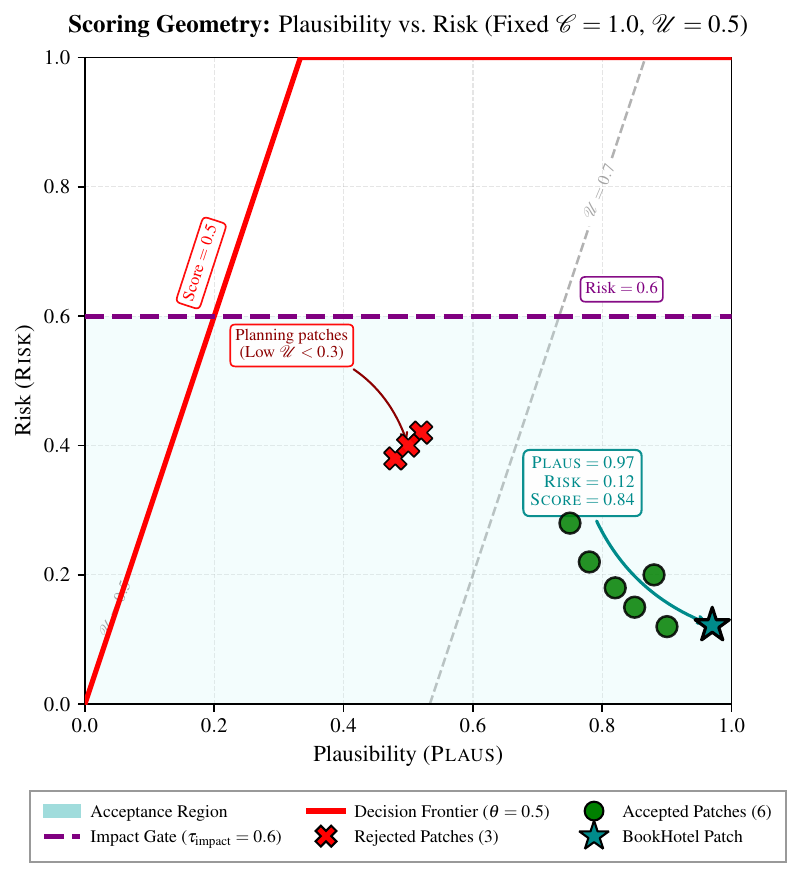}
\caption{\textbf{Four-dimensional scoring geometry.} Each patch $\Delta o$ is evaluated along plausibility (Eq.~\ref{eq:plausibility}), consistency (Eq.~\ref{eq:consistency}), utility (Eq.~\ref{eq:utility}), and risk (Eq.~\ref{eq:risk}). Aggregate score (Eq.~\ref{eq:score} in main text) must exceed $\theta = 0.18$ for acceptance.}
\label{fig:scoring}
\end{figure}

\section{Governance Details}
\label{app:governance-detail}

\subsection{Provenance, Rollback, Canary, HITL Gate, Trust Score, and Commit Equations}

\textbf{Provenance tuple.} Every committed change stores:
\begin{equation}
\text{prov}(\Delta o) = \langle \text{source}, \text{inputs}, \text{context}, \text{rationale}, t, \tau_{\text{ref}} \rangle,
\label{eq:provenance}
\end{equation}
enabling full forensic trace-back. Selective evidence deletion supports GDPR/HIPAA compliance while retaining functional patches.

\vspace{3mm}

\begin{tcolorbox}[artifactblue,title={Example Provenance / Commit Artifact}]
\small
\textbf{edit\_key:} \texttt{bookhotel.precondition.blocked\_card}\\
\textbf{source:} \texttt{FDKA / ProposeEdit}\\
\textbf{inputs:} failure trace, operator snapshot, retrieved similar traces\\
\textbf{context:} \texttt{travel\_planning}, seed 42, failure class \texttt{PAY-401}\\
\textbf{rationale:} add a blocked-card precondition so replanning avoids a policy-invalid payment path\\
\textbf{rollback set:} delete new predicate, restore prior operator version\\
\textbf{trace reference:} pointer to the originating failure trace used for forensic audit
\end{tcolorbox}

\textbf{Rollback set.} All edits are versioned:
\begin{equation}
\mathcal{R}_{\Delta o} = \{\texttt{del}(\text{new\_pred}), \texttt{add}(\text{old\_pred}), \ldots\},
\label{eq:rollback}
\end{equation}
enabling one-step deterministic reversion at any point.

\textbf{Canary Score Rate (CSR).} Canary testing executes the patched operator $o'$ on up to $n_{\text{canary}} = 8$ sandboxed scenarios:
\begin{equation}
\text{CSR} = \frac{n_{\text{pass}} + 0.5 \cdot n_{\text{mitigated}}}{n_{\text{canary}}},
\label{eq:csr}
\end{equation}
with strict-mode commit threshold $\tau_{\text{canary}} = 0.8$. When fewer retrieved examples are available, the system enters a low-power mode using the available examples and/or deterministic suite and rejects on any observed failure. Failed canary tests execute with no side effects, preventing user-facing regressions.

\textbf{Human-in-the-Loop gate.} High-risk or low-confidence patches require human approval:
\begin{equation}
\textsc{Gate}(\Delta o) = \begin{cases}
\texttt{queue\_human} & \text{if } s_{\text{risk}} > \tau_{\text{impact}} \;\lor\; \textsc{Score}(\Delta o) < \tau_{\text{conf}} \\
\texttt{auto\_approve} & \text{otherwise}
\end{cases}
\label{eq:hitl}
\end{equation}
with defaults $\tau_{\text{impact}} = 0.6$ and $\tau_{\text{conf}} = 0.5$. Safety-critical domains should use $\tau_{\text{impact}} = 0.3$, $\tau_{\text{conf}} = 0.7$. Human approvals update trust ($s \leftarrow s + 5$); denials apply penalty ($f \leftarrow f + 10$); rationales are indexed for future retrieval.

\textbf{Trust score.} Upon first commit, the patch receives a trust score via Beta--Bernoulli conjugate priors:
\begin{equation}
\rho(\Delta o) = \frac{s + \alpha}{s + f + \alpha + \beta},
\label{eq:trust}
\end{equation}
where $s$ is success count, $f$ is failure count, and $(\alpha, \beta) = (2, 1)$ is an optimistic prior ($\rho_0 = 2/3$). As the patch accrues outcomes, $\rho$ converges to empirical success rate. Patches with $\rho < 0.3$ over $n=10$ tasks are auto-flagged for rollback.

\textbf{Commit operation.} The commit stores:
\begin{equation}
\textsc{Commit}(\Delta o) = \langle \editkey, \Delta o, \mathcal{R}_{\Delta o}, \text{prov}(\Delta o), \rho_0, t \rangle,
\label{eq:commit}
\end{equation}
where $\editkey = \textsc{Hash}(\text{scope}, \text{predicate}, \text{subject})$ is a stable identifier for conflict detection, $\mathcal{R}_{\Delta o}$ is the rollback set (Eq.~\ref{eq:rollback}), $\text{prov}(\Delta o)$ is the provenance tuple (Eq.~\ref{eq:provenance}), and $t$ is the commit timestamp. To prevent thrashing after rollback, gates are temporarily tightened: $\tau_{\text{conf}} \leftarrow 1.2 \cdot \tau_{\text{conf}}$ for 50 tasks (exponential decay).

\subsection{Integrated Governance Pipeline (8-Step)}

The complete governance envelope executes as follows for each proposed patch:
\begin{enumerate}[leftmargin=*,itemsep=2pt]
\item \textbf{Propose:} FDKA generates $\Delta o$ with full provenance (source, inputs, context, rationale).
\item \textbf{Ledger check:} \textsc{CheckAndStage} detects coverage or reverse conflicts; resolves or escalates.
\item \textbf{Score:} Multi-dimensional scoring (Eq.~\ref{eq:score}) evaluates plausibility, consistency, utility, and risk.
\item \textbf{Guardrails:} Value (Eq.~\ref{eq:val-guard}) and causal (Eq.~\ref{eq:caus-guard}) gates veto or escalate unsafe edits.
\item \textbf{Human gate:} High-impact or low-confidence patches queue human review immediately after guardrails; only auto-approved patches continue.
\item \textbf{Canary test:} CSR (Eq.~\ref{eq:csr}) provides empirical safety evidence for auto-approved patches or post-review approvals.
\item \textbf{Commit:} On approval, commit (Eq.~\ref{eq:commit}) with trust initialization and rollback set.
\item \textbf{Monitor:} Trust scoring (Eq.~\ref{eq:trust}) tracks reliability; automatic rollback triggered if $\rho < 0.3$.
\end{enumerate}
This multi-layered defense ensures no single mechanism is a single point of failure.

\subsection{Governance Stress Test Results}

Table~\ref{tab:governance-stress} reports the synthetic audit; Table~\ref{tab:governance-live-stress} reports the live suite.

\begin{table}[h]
\centering
\caption{\textbf{Synthetic governance stress test outcomes.} Eight synthetic patches (4 safe, 4 unsafe) evaluated to probe guardrail behavior. All decisions match expected outcomes (100\% decision accuracy).}
\label{tab:governance-stress}
\begin{tabular}{lcc}
\toprule
\textbf{Metric} & \textbf{Count} & \textbf{Rate} \\
\midrule
Patches evaluated             & 8   & ---    \\
Value vetoes (unsafe)         & 2   & 25\%   \\
Causal escalations (ambiguous) & 2  & 25\%   \\
Canary passes (safe patches)  & 4/4 & 100\%  \\
Decision accuracy             & 8/8 & 100\%  \\
\bottomrule
\end{tabular}
\end{table}

\textbf{Live governance-activation suite.} Complementing the synthetic audit, we ran a fixed 6-task e-commerce suite that forces \texttt{auth\_schema\_drift} on \texttt{PlaceOrder} under identical seeds and task list for two configs. The same OpenAI-generated patch family is proposed: replace legacy \texttt{auth\_token} with \texttt{signed\_session\_token}. Under full governance, the migration is escalated before commit on all six tasks; under the governance-off stress ablation, the first such patch is committed and the remaining tasks recover without further proposals. This suite is used to test deployment gating rather than average-case SR.

\begin{tcolorbox}[artifactorange,title={Example Governance Escalation Record: Auth-Sensitive Schema Migration}]
\small
\textbf{Failure class:} \texttt{PlaceOrder:auth\_schema\_drift}\\
\textbf{Proposed edit:} replace legacy \texttt{auth\_token} with \texttt{signed\_session\_token} in the order-placement tool schema.\\
\textbf{Guardrail outcome:} value constraints pass, but causal review marks the migration as high-impact on the authentication path and returns \texttt{request\_human}.\\
\textbf{Canary:} not executed under full governance because escalation occurs before deployment.\\
\textbf{Counterfactual:} in the governance-off stress ablation, the first such patch is committed and the remaining five tasks recover without further proposals.
\end{tcolorbox}

\begin{table}[h]
\centering
\caption{\textbf{Live governance-activation suite.} Six fixed auth-sensitive schema-drift tasks under identical seeds/task list. Full governance escalates every proposed migration before commit; governance-off commits once and suppresses the remaining failures.}
\label{tab:governance-live-stress}
\begin{tabular}{lcccc}
\toprule
\textbf{Config} & \textbf{Patch Attempts} & \textbf{Escalations} & \textbf{Commits} & \textbf{Recovered Tasks} \\
\midrule
Full governance                & 6 & 6 & 0 & 0/6 \\
Governance-off stress ablation & 1 & 0 & 1 & 6/6 \\
\bottomrule
\end{tabular}
\end{table}

\section{TTA Convergence Proof}
\label{app:proofs}

\begin{theorem}[TTA Convergence Bound]
\label{thm:tta-bound}
Let $\mathcal{F}$ be a failure class. Assume: (i)~experience pool contains $\geq k_{\min}$ similar traces; (ii)~utility scoring detects preventable failures with probability $\geq p_{\text{detect}}$; (iii)~acceptance thresholds filter false positives at rate $\leq \epsilon$. Then with probability $\geq 1-\delta$, FDKA proposes a correct patch within:
\begin{equation}
\text{TTA} \leq \left\lceil \frac{\log(1/\delta)}{p_{\text{detect}} \cdot (1-\epsilon)} \right\rceil
\label{eq:tta-bound}
\end{equation}
tasks after observing $\mathcal{F}$.
\end{theorem}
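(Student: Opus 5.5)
I will model each task after the first observation of $\mathcal{F}$ as an independent Bernoulli trial. The plan is then a geometric waiting-time argument followed by the elementary inequality $1-x\le e^{-x}$.

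\textbf{Step 1: lower-bound the per-task success probability.} Fix a task $t$ at which an instance of $\mathcal{F}$ is encountered and FDKA is invoked. By assumption (i), the experience pool $\mathcal{R}_{\text{exp}}$ holds at least $k_{\min}$ similar traces, so the utility term in Eq.~\eqref{eq:utility} is computed on a nondegenerate sample. Let $S_t$ be the event that, on task $t$, FDKA proposes a correct patch and that patch passes the acceptance rule of Eq.~\eqref{eq:acceptance}. I decompose $S_t$ as $D_t\cap A_t$:
\begin{itemize}
\item $D_t$ is the event that utility scoring detects the preventable failure. By (ii), $\Pr[D_t]\ge p_{\text{detect}}$.
\item $A_t$ is the event that the thresholds do not wrongly discard or misclassify the proposal. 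I read (iii) as guaranteeing $\Pr[A_t\mid D_t]\ge 1-\epsilon$.
\end{itemize}
Together these give $q:=\Pr[S_t]\ge p_{\text{detect}}(1-\epsilon)$.

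\textbf{Step 2: independence across tasks.} Assume, or extract from the seeded and replay-based design, that the events $S_t$ are independent across tasks; conditionally independent given the past suffices. Since the pool only grows, assumption (i) continues to hold and $q$ stays bounded below.

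\textbf{Step 3: the bound.} The probability that no correct patch appears within $n$ tasks is at most
\[
(1-q)^n\le e^{-qn}.
\]
Setting $e^{-qn}\le\delta$ gives $n\ge \log(1/\delta)/q$. Substituting $q\ge p_{\text{detect}}(1-\epsilon)$ and taking the ceiling yields Eq.~\eqref{eq:tta-bound}.

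\textbf{Main obstacle: interpreting assumption (iii).} As stated, (iii) controls false positives, whereas the proof needs to control false rejections of the correct patch. I would first try to state explicitly that, conditional on detection, the correct patch clears $\theta$, both veto gates, and the canary test with probability at least $1-\epsilon$, and to argue this from the definitions of Eq.~\eqref{eq:score} and Eq.~\eqref{eq:csr}.

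A second subtlety is that the independence in Step 2 must survive adaptive changes to the pool and thresholds. This is what the EMA tuning with small $\alpha$ is meant to handle, since it makes the thresholds change only slowly. If full independence cannot be justified, I would fall back to a martingale/conditional version of the argument, using
\[
\Pr[\text{no success by } n]=\prod_t \Pr[\neg S_t\mid \text{past}]\le (1-q)^n.
\]
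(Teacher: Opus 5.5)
Your proposal is correct and follows the paper's proof. Both arguments treat adaptation as a geometric waiting time with per-task success probability $p_{\text{detect}}(1-\epsilon)$, reading (iii) as a bound on wrongly rejecting the correct patch (which the paper assumes implicitly), and then solve $(1-p)^t \le \delta$ for $t$. Your version adds two small refinements: the inequality $(1-q)^n \le e^{-qn}$ makes rigorous, for every $q\in(0,1]$, the step the paper gives as the small-$p$ approximation $\log(1/(1-p)) \approx p$ (which errs in the safe direction, since $\log(1/(1-p)) \ge p$), and your conditional-probability fallback slightly weakens the paper's outright independence assumption.
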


\begin{proof}[Proof of Theorem~\ref{thm:tta-bound}]
Model adaptation as a geometric process where each task independently proposes a correct patch with probability $p_{\text{success}} = p_{\text{detect}} \cdot (1-\epsilon)$. The number of trials $T$ until first success follows $T \sim \text{Geom}(p_{\text{success}})$ with tail bound:
\[
P(T > t) = (1-p_{\text{success}})^t.
\]
Setting $P(T > t) \leq \delta$ and solving for $t$:
\[
t \geq \frac{\log(1/\delta)}{\log(1/(1-p_{\text{success}}))} \approx \frac{\log(1/\delta)}{p_{\text{success}}}
\]
for small $p_{\text{success}}$, using $\log(1/(1-p)) \approx p$. Substituting $p_{\text{success}} = p_{\text{detect}}(1-\epsilon)$ yields Eq.~\eqref{eq:tta-bound}.

\textbf{Assumption validation:} (i)~Experience pool coverage ($k_{\min}$) ensures reliable utility scoring. (ii)~$p_{\text{detect}} \geq 0.8$ from counterfactual replay validation. (iii)~Multi-layered governance achieves $\epsilon \leq 0.1$ empirically (zero rollbacks observed).
\end{proof}

\section{Evaluation Harness}
\label{app:evaluation}

\subsection{Dataset Specification}

Travel-planning benchmark: 25 tasks, failure rate 70\%, seed=42 (hard difficulty). Tasks span Policy Flip (blackout dates, approval thresholds, loyalty restrictions), Tool Drift (schema changes, rate limits, transient 503 errors), and OOD Entities (group bookings, multi-city, accessibility). Travel-stochastic adds uncertainty injection, transient failures, and policy shifts over 25 hard tasks. E-commerce contains 25 tasks in order processing, payment validation, inventory, and refund handling (easy difficulty as configured). ITSM contains 25 tasks spanning access provisioning, patch deployment, credential reset, and ticket creation (hard difficulty). Each task includes a natural language instruction, scenario class, expected failure operator, and success criterion. Separate from these 25-task suites, we report a targeted governance-activation suite: six fixed e-commerce instructions with forced \texttt{auth\_schema\_drift}, run under identical seeds/task list for full-governance and governance-off stress configs. This probe is excluded from headline SR averages because its purpose is to isolate deployment gating rather than average-case task performance. The manuscript therefore combines: (i) a single-seed travel baseline table, (ii) 3-seed direct comparisons across travel planning, travel stochastic, and e-commerce, (iii) 3-seed ablations on those same three scenarios, (iv) a 3-seed \sys cross-scenario table spanning travel planning, travel stochastic, e-commerce, and ITSM, and (v) the fixed 6-task governance-activation suite.

\subsection{Baseline and Ablation Configurations}

Table~\ref{tab:baseline-configs} details all baseline and ablation configurations.

\begin{table}[h]
\centering
\caption{\textbf{Baseline and ablation configurations.} All systems share identical planners, parsers, and domain ontologies; differences lie in adaptation mechanisms.}
\label{tab:baseline-configs}
\begin{tabular}{lp{7cm}}
\toprule
\textbf{Configuration} & \textbf{Key Properties} \\
\midrule
Static-NS             & Fixed operators, no adaptation; baseline for no-learning \\
LLM-Reflect           & LLM with cross-episode textual reflection (FIFO, max 20); GPT-4o-mini; no operator edits \\
Verify-Only           & Neuro-symbolic with precondition verification; no FDKA or governance \\
ReAct                 & Per-step Thought--Action--Observation; GPT-4o-mini; no cross-episode memory or knowledge updates \\
\sys-Full             & Complete system with all components \\
\midrule
$-$Governance (main)  & Disables value/causal gates while retaining canary and rollback; isolates routine ablation without removing empirical deployment checks \\
Governance-stress $-$Governance & Dedicated 6-task auth-schema suite disabling value/causal gates, canary, provenance, and rollback to isolate deployment gating on security-sensitive schema edits \\
$-$FDKA               & No patch synthesis; static operators post-initialization \\
$-$Verify             & No verify-before-act precondition checks \\
$-$Arbitration        & No metacognitive control; always use \sone pathway \\
\bottomrule
\end{tabular}
\end{table}

\subsection{Key Metric Formulas}

\textbf{Repeat-Failure Rate (RFR):}
\[
\text{RFR}_t = \frac{|\{i \in [t-99,t] : \text{failure}(i) \wedge \text{repeat}(i)\}|}{|\{i \in [t-99,t] : \text{failure}(i)\}|} \times 100\%,
\]
where $\text{repeat}(i) = \mathbf{1}[\exists j < i : \text{error}(j) = \text{error}(i)]$.

\textbf{Time-to-Adapt (TTA):}
\[
\text{TTA}(\mathcal{F}) = t_{\text{adapt}}(\mathcal{F}) - t_{\text{first}}(\mathcal{F}),
\]
where $t_{\text{first}}(\mathcal{F})$ is the first task containing failure class $\mathcal{F}$ and $t_{\text{adapt}}(\mathcal{F})$ is the first task at which a committed repair for $\mathcal{F}$ is installed (or, on longer horizons, the first task where task-level RFR for $\mathcal{F}$ falls below 5\%). The released code records the commit-based version in short horizons; the sustained-improvement interpretation underlies Theorem~\ref{thm:tta-bound}.

\textbf{Statistical testing:} The current manuscript mixes single-seed and 3-seed evidence. The travel baseline table is single-seed; the direct-comparison, recurring-failure stress, e-commerce stress, cross-scenario, and ablation tables report 3-seed means/std where stated; the governance-activation suite is a fixed 6-task counterfactual. \S\ref{sec:results} reports paired-draw bootstrap percentile tests ($B{=}10{,}000$) on the headline $\Delta$SR and $\Delta$holdout-failure-rate contrasts; broader formal significance testing across more domains and seeds is deferred to future work and should accompany any stronger generalization claims.

\subsection{Per-Failure Class Analysis}

Table~\ref{tab:per-failure} reports the observed learnability hierarchy.

\begin{table}[h]
\centering
\caption{\textbf{Observed hierarchy of learnability in the verified runs.} Tool-schema drift is the clearest successful patch target; several other failure classes are recovered by verification and local repair without a committed patch. Some commerce failures remain unresolved within the current operator-level edit space.}
\label{tab:per-failure}
\begin{tabular}{llcccc}
\toprule
\textbf{Failure Class} & \textbf{Tier} & \textbf{Count} & \textbf{TTA (tasks)} & \textbf{Resolved} & \textbf{Residual RFR} \\
\midrule
Travel API drift      & I & 2  & 0            & Yes & 0.0\% \\
Travel precondition gap & I & 2 & 12          & Partially & 10.0\% \\
ITSM API drift        & I & 2  & 0            & Yes & 5.0\% \\
ITSM access/MFA gates & I & 9  & 15 (local repair) & Recovered & 15.0\% \\
E-commerce order constraints & I & 14 & $\infty$ & No & 35.0\% \\
\midrule
Higher-level commerce planning & II & 2 & $\infty$ & No  & 20.0\% \\
\bottomrule
\end{tabular}
\end{table}

\subsection{Capability-Retention and Paraphrased-Holdout Tables}

Tables~\ref{tab:retention} and~\ref{tab:variant} accompany the corresponding paragraphs in \S\ref{sec:results}.

\begin{table}[h]
\centering
\small
\setlength{\tabcolsep}{4pt}
\renewcommand{\arraystretch}{0.94}
\caption{\textbf{Capability-retention probes} (10 tasks/seed $\times$ 3 seeds). Retention is exact across single and cumulative edits; the probe includes patched operators on valid inputs (e.g.\ a valid \texttt{SAVE10} promo after \texttt{ApplyPromoCode} is tightened).}
\label{tab:retention}
\begin{tabular}{lccccc}
\toprule
\textbf{Probe} & \textbf{Edits} & \textbf{Pre-SR} & \textbf{Post-SR} & \textbf{$\Delta$} & \textbf{Aggregate} \\
\midrule
Cross-operator (adjacent ops)  & 1 schema      & 100.0\% & 100.0\% & $+$0.0\,pp & 30/30 $\to$ 30/30 \\
Same-operator (valid inputs)   & 1 schema      & 100.0\% & 100.0\% & $+$0.0\,pp & 30/30 $\to$ 30/30 \\
Cumulative (multi-patch)       & 2--3, 3 cat.\ & 100.0\% & 100.0\% & $+$0.0\,pp & 30/30 $\to$ 30/30 \\
\bottomrule
\end{tabular}
\end{table}

\begin{table}[h]
\centering
\small
\setlength{\tabcolsep}{4pt}
\renewcommand{\arraystretch}{0.94}
\caption{\textbf{Paraphrased same-root holdout} (e-commerce, 3 seeds). Prefix and holdout instructions differ lexically (max token Jaccard $=0.25$, mean $0.11$) but share the same \texttt{PlaceOrder} tool-schema-drift root cause. Reflexion reaches 100\% holdout SR yet still exposes the underlying fault on 38.9\% of holdout tasks.}
\label{tab:variant}
\begin{tabular}{lcccc}
\toprule
\textbf{Agent} & \textbf{SR (\%)} & \textbf{Holdout SR (\%)} & \textbf{Holdout fail rate} & \textbf{Patches} \\
\midrule
\sys      & \textbf{100.0} & \textbf{100.0} & \textbf{0.0\,(0/18)} & \textbf{1.0$\pm$0.0} \\
Reflexion & 100.0 & 100.0 & 38.9\,(7/18)  & 0.0 \\
ReAct     & 71.4  & 88.9  & 61.1\,(11/18) & 0.0 \\
\bottomrule
\end{tabular}
\end{table}

\section{Additional Governance Notes}
\label{app:governance-notes}

The full-governance accepted patches in the current draft all pass value checks, causal checks, and canary testing, with zero observed rollbacks. The governance-off stress ablation is reported separately as a counterfactual deployment probe and should not be mixed with those full-envelope statistics. We omit model-by-model governance tables until reruns are available with matching artifact capture.

\section{Surgical Precision Analysis}
\label{app:surgical}

The system demonstrates \textbf{conservative, high-confidence patch generation} in the full-governance runs: every accepted patch clears governance and remains rollback-free (Table~\ref{tab:accepted-patches}). In e-commerce, \sys commits 2.7$\pm$0.6 patches per run spanning three distinct edit categories (schema updates, precondition additions, effect refinements), each targeting a different operator. This precision strategy indicates that FDKA triggers only when high-confidence root-cause diagnosis is achieved---not for every failure observed.

\textbf{Structural ratchet.} The versioned commit system acts as a structural ratchet: once a valid operator or tool-schema update is installed, that specific failure mode becomes difficult to repeat without an explicit rollback or a new upstream change. In the verified e-commerce runs, three distinct ratchets engage per run: a schema update on the order-placement operator (API drift), a precondition addition on the promo-code operator (field validation), and an effect refinement on the shipping-calculation operator (timeout recovery). This property contrasts with LLM-Reflect, where the same error can recur on every new task instance since no structural knowledge is updated.

\textbf{Comparison with fine-tuning.} Continual learning via fine-tuning typically requires many gradient updates per task class \citep{bell2025future}, and risks catastrophic forgetting of previously learned behaviors. \sys instead repairs \textit{structural knowledge gaps} through symbolic reasoning and governed commits. We do not make strong cost-ratio claims here because the current telemetry for token and latency accounting is incomplete in some scenarios.

\begin{table}[h]
\centering
\caption{\textbf{Representative accepted patches.} Two verified examples from travel planning and ITSM, both tool-schema repairs that passed the full governance pipeline with zero rollbacks.}
\label{tab:accepted-patches}
\small
\begin{tabular}{@{}lp{2.2cm}lp{2.45cm}p{3.3cm}@{}}
\toprule
\textbf{Scenario} & \textbf{Failure class} & \textbf{Target} & \textbf{Patch type} & \textbf{Observed effect} \\
\midrule
Travel planning & \shortstack[l]{\texttt{BookFlight:}\\\texttt{ToolError:API-V2}} & \texttt{BookFlight} & \texttt{UPDATE\_TOOL\_SCHEMA} & Canary passed; zero terminal recurrence for the accepted drift repair in the run. \\
ITSM (untuned) & \shortstack[l]{\texttt{DeployPatch:}\\\texttt{ToolError:API-V2}} & \texttt{DeployPatch} & \texttt{UPDATE\_TOOL\_SCHEMA} & Canary passed; subsequent ITSM tasks completed without terminal failure in the verified untuned run. \\
\bottomrule
\end{tabular}
\normalsize
\vspace{-4mm}
\end{table}

\vspace{-4mm}
\section{Future Directions}
\label{app:future}
\vspace{-4mm}

\textbf{Richer edit schemas and hierarchical learning.} The current proposal pipeline supports operator-level edits only. Extending to HTN-level changes (new operator templates, hierarchical decomposition rules, temporal constraints such as ``action $a$ must complete within $t$ seconds of $b$'') would address the unresolved commerce failures in Table~\ref{tab:per-failure} and increase expressiveness \citep{kwon2025fast,kwon2025neuro}. Techniques from program synthesis (sketching, version space learning) could expand the search over candidate patches while preserving deterministic parsing and typing.

\textbf{Scalable knowledge graph construction and maintenance.} LLM-assisted extraction currently addresses value constraint ($\kg^{\text{val}}$) construction. Extending to causal graphs ($\kg^{\text{cau}}$) via embedding-based completion and identifiability analysis would complete automated construction \citep{jaimini2024causal,zhao2025clause}. Value graphs should support versioning, effective dates, and jurisdictional scoping as first-class features. Temporal logic frameworks (LTL, CTL) could formalize time-dependent constraints beyond current conflict detection.

\textbf{Multi-agent federated FDKA.} Current \sys is single-agent. In multi-agent settings (multiple instances serving different users or organizations), patches learned by one agent could benefit others. Federated FDKA would aggregate patches across agents while preserving privacy: agents share anonymized provenance tuples and scoring distributions, not raw traces \citep{julian2025building}. Conflict resolution is critical: if Agent A learns ``corporate cards allowed on Mondays'' and Agent B learns ``corporate cards blocked on Mondays'' (due to different organizational policies), the federation must detect incompatibility and maintain policy-scoped patches rather than global rules.

\section{Cross-Scenario Matrix}
\label{app:cross-scenario-table}

Table~\ref{tab:scenarios} summarizes the untuned 3-seed behavior of \sys across the reported scenarios.

\begin{table}[H]
	\centering
	\caption{\textbf{Cross-scenario behavior of \sys} (mean$\pm$std over 3 seeds for untuned scenarios). Travel domains are strong, e-commerce demonstrates diverse patch types, and ITSM now also verifies as a stable untuned multi-seed transfer domain.}
	\label{tab:scenarios}
	\begin{tabular}{lccccc}
		\toprule
		\textbf{Scenario} & \textbf{SR (\%)} & \textbf{RFR$_\text{obs}$} & \textbf{RFR$_\text{term}$} & \textbf{CSR (\%)} & \textbf{Accepted patches} \\
		\midrule
		Travel plan.  & \textbf{100.0$\pm$0.0} & 89.3$\pm$22.7 & \textbf{0.0$\pm$0.0} & \textbf{100.0$\pm$0.0} & 1.0$\pm$0.0 \\
		Travel stoch. & \textbf{100.0$\pm$0.0} & 89.3$\pm$22.7 & \textbf{0.0$\pm$0.0} & \textbf{100.0$\pm$0.0} & 1.0$\pm$0.0 \\
		E-commerce    & \textbf{94.7$\pm$2.3} & 25.3$\pm$20.1 & 1.3$\pm$2.3 & 94.7$\pm$2.3 & \textbf{2.7$\pm$0.6} \\
		ITSM          & \textbf{100.0$\pm$0.0} & 52.0$\pm$0.0 & \textbf{0.0$\pm$0.0} & \textbf{100.0$\pm$0.0} & 1.0$\pm$0.0 \\
		\bottomrule
	\end{tabular}
\end{table}

\section{Single-Seed Baseline Comparison}
\label{app:baseline-table}

Table~\ref{tab:baseline} gives the full single-seed travel baseline used for the main-text comparison.

\begin{table}[H]
	\centering
	\caption{\textbf{Single-seed baseline comparison} (travel planning, 25 tasks, seed=42). \sys achieves 100\% SR via structural adaptation: zero terminal repeat failures and 1 accepted patch. Non-adaptive baselines plateau at 30--49\%; LLM-Reflect (GPT-4o-mini) achieves 88\% with 4\% permanent failures; ReAct and Reflexion match 100\% SR via intra-episode retry but cannot suppress recurrence across tasks. RFR$_\text{obs}$: repeat-failure rate during recovery; RFR$_\text{term}$: terminal failures after all recovery; $\dagger$: persistent---no cross-episode learning; $\ddagger$: cross-episode verbal buffer empty (within-episode recovery always succeeded).}
	\label{tab:baseline}

	\begin{tabular}{lcccccc}
		\toprule
		\textbf{System} & \textbf{SR (\%)} & \textbf{RFR$_\text{obs}$} & \textbf{RFR$_\text{term}$} & \textbf{CSR (\%)} & \textbf{Patches} & \textbf{Accept.} \\
		\midrule
		\sys & \textbf{100.0} & 76.0 & \textbf{0.0} & \textbf{100.0} & \textbf{1} & \textbf{100\%} \\
		\midrule
		ReAct        & 100.0 & 76.0$^\dagger$ & 0.0 & 100.0 & 0 & --- \\
		Reflexion    & 100.0 & 80.0$^\ddagger$ & 0.0 & 100.0 & 0 & --- \\
		LLM-Reflect  & 88.0  & 43.2 & 4.0  & 88.0  & 0 & --- \\
		Static-NS    & 49.2  & 65.6 & 45.6 & 49.2  & --- & --- \\
		Verify-Only  & 30.0  & 66.0 & 66.0 & 30.0  & --- & --- \\
		\bottomrule
	\end{tabular}
	
	{\scriptsize\setlength{\baselineskip}{7pt}$\dagger$~ReAct resolves each episode via intra-episode retry; failure \textit{patterns} persist unchanged across episodes (no knowledge update). $\ddagger$~Reflexion's cross-episode verbal buffer remained empty in this run (within-episode recovery always succeeded), collapsing to ReAct-equivalent behaviour.}
\end{table}

\section{FDKA Patch Example}
\label{app:patch-example}

This appendix shows one concrete governed patch from failure trace to accepted structural edit (Figure~\ref{fig:patch-example}).

\vspace{-2mm}
\begin{tcolorbox}[artifactgreen,title={Compact ProposeEdit Record: Accepted Patch Example}]
\small
\textbf{operator:} \texttt{BookHotel}\\
\textbf{edit type:} \texttt{ADD\_PRECONDITION}\\
\textbf{patch body:} \texttt{not\_blocked\_card(payment, dates)}\\
\textbf{scores:} plausibility 0.92; consistency 0.85; utility 0.89; risk 0.15; aggregate 0.87\\
\textbf{governance result:} value allow; causal allow; canary 7/7 pass\\
\textbf{commit result:} accepted with deterministic rollback capability
\end{tcolorbox}

\vspace{-2mm}
\begin{figure}[htbp]
\centering
\includegraphics[width=0.89\linewidth]{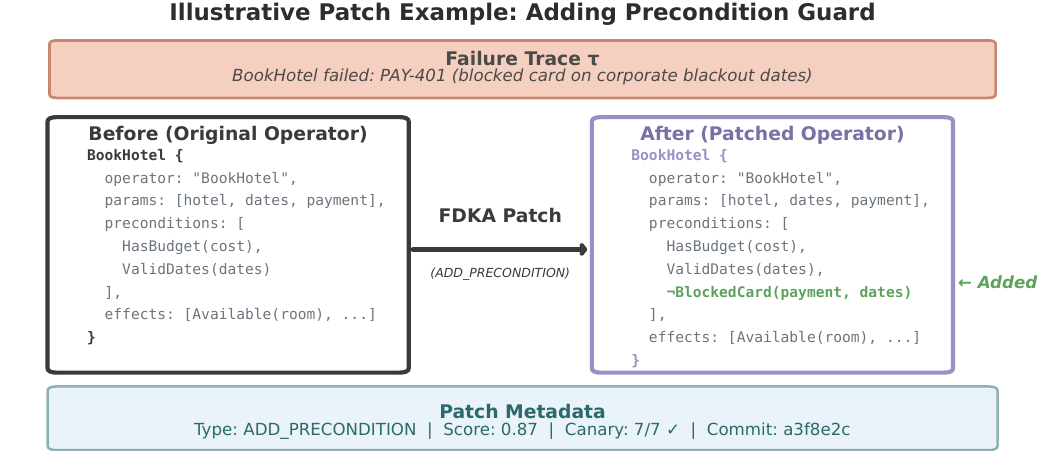}
\caption{\textbf{FDKA patch example.} A payment-authorization failure (PAY-401) on the hotel-booking operator triggers synthesis of a blocked-card precondition. The box above lists the scoring breakdown and governance outcome; the figure shows the end-to-end pipeline from failure trace to committed edit.}
\label{fig:patch-example}
\end{figure}

\section{Travel Planning Walkthrough}
\label{app:walkthrough}

\begin{tcolorbox}[artifactblue,title={Walkthrough Patch Summary: Before/After Operator Constraint}]
	\small
	\textbf{Before patch}\\
	\texttt{PRE: hotel\_available(city, dates)}\\
	\texttt{~~~~~payment\_method\_present(card)}

	\vspace{0.4em}
	\textbf{After patch}\\
	\texttt{PRE: hotel\_available(city, dates)}\\
	\texttt{~~~~~payment\_method\_present(card)}\\
	\texttt{~~~~~not\_blocked\_card(card, dates)}

	\vspace{0.4em}
	This accepted edit inserts the blocked-card check into the operator precondition, so replanning switches to a valid payment alternative instead of repeating the same failure.
\end{tcolorbox}

The full pipeline is illustrated in Figure~\ref{fig:walkthrough}. A user requests: \textit{``Book a hotel in San Francisco for April 10--12 and a flight from Newark on April 10.''}

\begin{figure}[H]
	\centering
	\includegraphics[width=\linewidth]{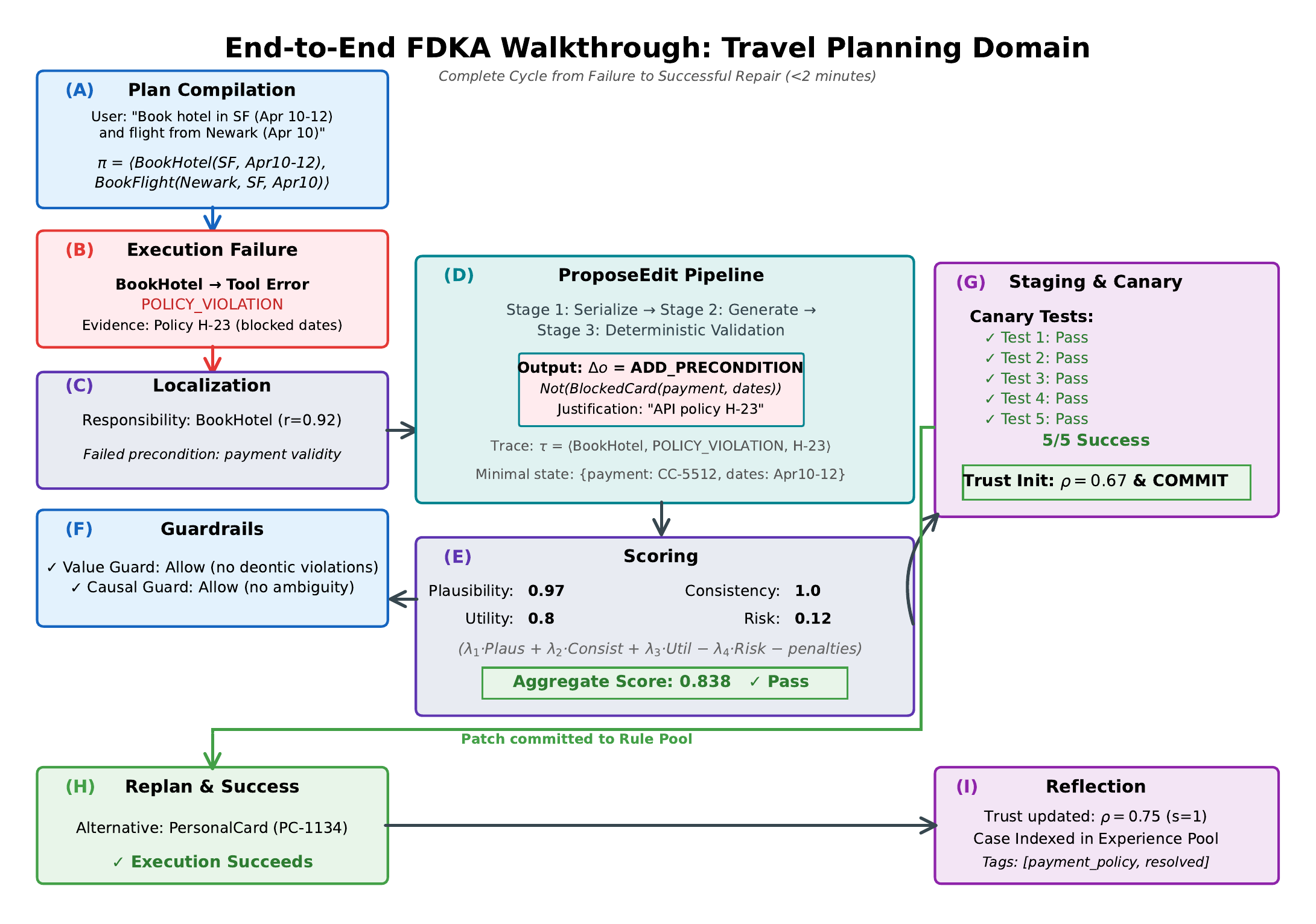}
	\caption{\textbf{End-to-end FDKA walkthrough.} (A)~Plan compilation. (B)~Policy-violation failure. (C)~Localization: hotel-booking operator ($r=0.92$). (D)~Patch proposal. (E)~Scoring yields 0.838. (F)~Guardrails allow. (G)~5/5 canary passes; commit ($\rho=0.67$). (H)~Replan succeeds. (I)~Reflection updates trust ($\rho=0.75$). Total: $<$2 minutes.}
	\label{fig:walkthrough}
\end{figure}

\textbf{Failure.} The API returns a policy violation: corporate cards are blocked for April 10--15. Trace: $\tau_t = \langle s_0, \text{BookHotel}, \Sigma_t, \text{ToolError}\rangle$.

\textbf{Localization.} The hotel-booking operator scores $r = 0.92$ (highest of 12 operators) due to high symbolic delta and tool invocation match (Eq.~\ref{eq:responsibility}).

\textbf{Scoring.} Plausibility (Eq.~\ref{eq:plausibility}): 0.97. Consistency (Eq.~\ref{eq:consistency}): 1.0 (Z3 SAT confirmed). Utility (Eq.~\ref{eq:utility}): 0.8 (16/20 traces prevented). Risk (Eq.~\ref{eq:risk}): 0.122. Aggregate (Eq.~\ref{eq:score}): \textbf{0.838}.

\textbf{Guardrails.} Value (Eq.~\ref{eq:val-guard}): no Prohibited or Obligatory violations $\to$ allowed. Causal (Eq.~\ref{eq:caus-guard}): identifiability 18/20, normalized impact 0.20 $\to$ allowed.

\textbf{Canary \& Commit.} CSR (Eq.~\ref{eq:csr}): 5/5 pass. Trust (Eq.~\ref{eq:trust}): $\rho_0 = 0.67$. Commit (Eq.~\ref{eq:commit}): edit key is a content-addressed hash of the operator scope and predicate.

\textbf{Replan \& Success.} The updated precondition (blocked-card check) is detected during replanning; a personal card alternative is selected. Execution succeeds. Trust updates to $\rho = 0.75$. \textbf{Total: 1 minute 47 seconds.}

\end{document}